\def\eqref#1{equation~\ref{#1}}
\def\Eqref#1{Equation~\ref{#1}}
\def\1{\bm{1}}
\def\va{{\bm{a}}}
\def\vg{{\bm{g}}}
\def\vu{{\bm{u}}}
\def\vv{{\bm{v}}}
\def\vx{{\bm{x}}}
\def\vy{{\bm{y}}}
\def\mH{{\bm{H}}}
\def\mK{{\bm{K}}}
\def\mQ{{\bm{Q}}}
\def\mW{{\bm{W}}}
\DeclareMathAlphabet{\mathsfit}{\encodingdefault}{\sfdefault}{m}{sl}
\SetMathAlphabet{\mathsfit}{bold}{\encodingdefault}{\sfdefault}{bx}{n}
\newcommand\numberthis{\addtocounter{equation}{1}\tag{\theequation}} 
\maxnormalurl\url{https://en.wikipedia.org/wiki/Fisher%E2%80%93Tippett%E2%80%93Gnedenko_theorem#Gumbel_distribution}

\usepackage{graphicx}
\usepackage{microtype}
\usepackage{natbib}
\usepackage{enumitem}
\usepackage{booktabs}
\usepackage{multirow}
\usepackage[bottom]{footmisc}

\usepackage{algorithm}
\usepackage{algpseudocode}
\usepackage{setspace}

\newtheorem{theorem}{Theorem}[section]

\newtheorem{lemma}[theorem]{Lemma}

\theoremstyle{definition}
\newtheorem{assumption}{Assumption}
\newcommand{\rms}{\operatorname{M}_{2}}

\title{\emph{Amos}: An Adam-style Optimizer with Adaptive Weight Decay towards Model-Oriented Scale}


\author{Ran Tian, Ankur P. Parikh \\
Google Research \\
\texttt{\{tianran,aparikh\}@google.com}
}

%

\iclrfinalcopy 
\begin{document}

\maketitle

\begin{abstract}
We present \emph{Amos}, a stochastic gradient-based optimizer designed for training deep neural networks. It can be viewed as an Adam optimizer with theoretically supported, adaptive learning-rate decay and weight decay. A key insight behind \emph{Amos} is that it leverages model-specific information to determine the initial learning-rate and decaying schedules.
When used for pre-training BERT variants and T5, Amos consistently converges faster than the state-of-the-art settings of AdamW, achieving better validation loss within $\leq70\%$ training steps and time, while requiring $\leq51\%$ memory for slot variables. Our code is open-sourced at: \url{https://github.com/google-research/jestimator}.
\end{abstract}

\section{Introduction}

The Adam \citep{adam} optimizer is widely used for training deep neural networks, demonstrating fast convergence especially in the early stages of training. Although previous works have found issues regarding the theoretical convergence of Adam as the training proceeds \citep{amsgrad}, in practice it is remedied by various learning-rate schedules and weight decay \citep{adamw}. Specifically, Adam with linear learning-rate decay and constant weight decay is the standard setting for pre-training large language models such as BERT \citep{bert}. However, these decay settings are usually ad-hoc, increase the number of hyper-parameters, and may introduce additional complexities in 
usage. For example, the linearly decaying learning-rate schedule requires knowing the number of training steps in advance, which makes it nontrivial to continuously train a model after the learning-rate decays to $0$. 

In this work, we present \emph{Amos}, a new optimizer with a theoretically supported and adaptive schedule for learning-rate and weight decay, which can significantly outperform the state-of-the-art AdamW settings for pre-training language models, provide guidance for hyper-parameter tuning, reduce the memory usage, and train continuously without having to specify the number of training steps a priori. 

A key insight behind Amos is a hyper-parameter $\bm{\tilde{\eta}}$ to be provided by the \emph{model architecture}, which indicates the expected scale of the trainable weights $\bm{\tilde{\theta}}$ of the model (\S\,\ref{sec:algorithm}), i.e., theoretically we assume that an optimal point $\bm{\tilde{\theta}}^\ast$ exists within the $\lvert\bm{\tilde{\theta}}^\ast\rvert\leq\bm{\tilde{\eta}}$ diameter. 
Deep neural networks are likely to satisfy such a constraint without degrading performance, because there exist many good local minima; and we show that an appropriate $\bm{\tilde{\eta}}$ for Amos can improve generalization and accelerate convergence (\S\,\ref{sec:scale_params}). In this work, $\bm{\tilde{\eta}}$ is calculated in a consistent way from the input/output scale of neural network components, which is hinted by the model design (\S\,\ref{sec:calc_eta}).
Given $\bm{\tilde{\eta}}$, Amos decides a learning-rate per variable, and its L2 regularization will lead the trained weights to the specified scale. The decay of the learning-rate is then determined by the L2 regularizer. Thus, Amos performs better because it can utilize the model-oriented information $\bm{\tilde{\eta}}$ efficiently; 
the name \emph{Amos} stands for ``\textbf{A}daptive weight decay towards \textbf{M}odel-\textbf{O}riented \textbf{S}cale''.

Empirically, we focus on the Transformer architecture \citep{transformer} since it is pre-dominant in pre-trained language models \citep{foundation}, but add additional experiments on LSTM \citep{lstm} and ResNet \citep{resnet}. We apply Amos to the pre-training of 4 models: BERT \citep{bert}, two Transformer variants with relative position representations \citep{DBLP:journals/corr/abs-2104-09864,rpe}, and the T5 model \citep{t5}; some with various model sizes and batch sizes.
In all experiments, Amos consistently outperforms the state-of-the-art setting, achieving better validation loss within $\leq70\%$ training steps and time (\S\,\ref{sec:pretrain_convergence}). Compared to AdamW, the memory usage for slot variables is reduced to $\leq51\%$ in Amos (\S\,\ref{sec:memory_reduction}). In addition, Amos does not calculate learning-rate from a maximum number of training steps, so one can seamlessly continue training from any checkpoints, which is not trivial for AdamW with linear learning-rate decay (\S\,\ref{sec:pretrain_convergence}).

\section{The Algorithm}
\label{sec:algorithm}



For notation, we denote model weights by $\bm{\tilde{\theta}}$, and an online learning algorithm 
recursively calculates a sequence of weights, $\bm{\tilde{\theta}}_1,\bm{\tilde{\theta}}_2,\ldots$, from initial weights $\bm{\tilde{\theta}}_0$ and training examples $z_t$ at each step $t=0,1,\dots$. An optimizer uses the gradient $\bm{\tilde{g}}_t=\nabla\ell(z_t;\bm{\tilde{\theta}}_t)$ to compute a weight update $\bm{\tilde{\theta}}_{t+1} \gets \bm{\tilde{\theta}}_{t} - \bm{\tilde{\delta}}_t$, in order to minimize the loss function $\ell(z;\bm{\tilde{\theta}})$. In neural network models, the model weights $\bm{\tilde{\theta}}$ is an array of trainable tensors (i.e.~variables) collected from all model components; we view a variable and its slices as subsets of the model weights (e.g.~$\bm{\theta}\subseteq\bm{\tilde{\theta}}$ is a variable slice that functions in part of the model). We use a bold letter to denote an array (e.g.~$\bm{\theta}_t,\,\bm{\tilde{\theta}}_t$), and the same normal letter to denote a scalar element of that array (e.g.~$\theta_t$) for describing element-wise operations. We use tilde for information of the whole model (e.g.~$\bm{\tilde{\theta}}_t$), and drop tilde to indicate subsets (e.g.~$\bm{\theta}_t$).

To start, we recall the update rule of the RMSProp optimizer \citep{rmsprop}, which computes the weight update by $\delta_t \gets \frac{\alpha}{\sqrt{v_t}}g_t$, where $\alpha$ is a scalar learning-rate and $v_t$ a running average of the squared gradients $g_t^2$. Based on this, Adam \citep{adam} replaces $g_t$ with its running average $m_t$ (i.e.~momentum), and adopts bias correction $\hat{m}_t,\,\hat{v}_t$ for running averages. Further, AdamW \citep{adamw} allows a schedule for learning-rate $\alpha_t$ (depending on the step $t$) and adds a weight decay: $\delta_t \gets \alpha_t\big(\frac{1}{\sqrt{\hat{v}_t}}\hat{m}_t+\gamma\theta_{t}\big)$, where $\gamma$ is a constant hyper-parameter. For pre-training Transformer variants, the learning-rate schedule $\alpha_t$ is set to linearly decay to $0$ after warm-up. Therefore, a maximum number of training steps before the learning-rate decays to $0$ has to be set as a hyper-parameter. Amos, with a similar construction, has the following update rule:
\begin{align}
\label{eq:amos_update_rule}
\bm{\delta}_t \gets 
d_t\Big(\frac{\xi\eta}{\sqrt{\hat{v}_t}}\vg_t+\frac{1}{2}\gamma_t\bm{\theta}_{t}\Big) \quad \textrm{where} \quad \gamma_t \gets c_t\frac{\xi^2}{\hat{v}_t}\rms(\vg_t)^2.
\end{align}


Here, $\rms(\va):=\sqrt{\frac{1}{k}\sum_{i=1}^{k}{a_i^2}}$ denotes the quadratic mean of entries of an array $\va\in\mathbb{R}^k$. 
The update rule consists of a gradient descent part (the term containing $\vg_t$) and an L2 regularization part (the term containing $\bm{\theta}_t$)\footnote{Following \citet{adamw}, we decouple the gradient of an L2 regularization term (taking the form of a weight decay) apart from the adaptive gradient normalization factor $\frac{1}{\sqrt{\hat{v}_t}}$. When an adaptive optimizer is used, \citet{adamw} point out that the decoupled weight decay is not equivalent to the L2 regularization without explicit decoupling, and the former is more appropriate. In this work, we always treat L2 regularization as decoupled weight decay, and use the two terms interchangeably.
}, similar to AdamW. 
The full Amos is shown in Algorithm~\ref{alg:amos}. We explain
several novel aspects below. 

\emph{Model-oriented scale}:
For each variable $\va\subseteq\bm{\tilde{\theta}}$ in the model weights, we specify the scale $\eta(\va)$ we expect $\va$ to converge to, i.e.~$\rms(\va^\ast)\approx\eta$ for an optimal $\bm{\tilde{\theta}}^\ast\supseteq\va^\ast$. Different variables may have different scale $\eta$'s. For a common case of a linear transformation, $\vy=\vx\mW+\vu\;(\mW,\vu\subseteq\bm{\tilde{\theta}},\,\mW\in\mathbb{R}^{m\times n},\,\vx\in\mathbb{R}^{m})$, we calculate $\eta(\mW)$ by assuming that $\vx$ is random Gaussian with standard deviation $\sigma_x$, and $\vy$ random Gaussian with standard deviation $\sigma_y$; so we have $\eta(\mW)=\sigma_y/(\sigma_x\sqrt{m})$ in order to satisfy the input/output standard deviation (assuming entries of $\mW$ to be Gaussian as well). Additionally, we set $\eta(\vu)=\sigma_y/2$ to ensure that $\vu$ has a slightly smaller magnitude than $\vx\mW$. The input/output standard deviation can be hinted by other layers in the model;
for example, the activation function GELU \citep{DBLP:journals/corr/HendrycksG16} usually expects the inputs to have standard deviation $\approx1$, because its non-linearity mostly lies within that range; also the output standard deviation of LayerNormalization \citep{DBLP:journals/corr/BaKH16} is expected to be $1$. For Transformer variants, we will discuss the input/output standard deviation of all types of non-linear layers and derive $\bm{\tilde{\eta}}$ in \S\,\ref{sec:calc_eta}.  

\emph{Factored initial learning-rate:} In Amos, we use $\xi\eta$ as the initial learning-rate, where $\eta$ is the model-oriented scale specified for each variable, and $\xi$ is a global learning-rate shared across all variables. For online optimizers, the learning-rate is generally affected by both data and model; by factoring the initial learning-rate into $\xi$ and $\eta$, we disentangle the two to some extent: While $\xi$ is tuned and may depend on the data, $\bm{\tilde{\eta}}$ is calculated from the model architecture.

\emph{Adaptive L2 regularization}: Unlike AdamW which uses a constant $\gamma$ for weight decay, the Amos weight decay $\bm{\tilde{\gamma}}_t$ is intended to control the scale of trained variables, rather than regularize the loss function; so $\gamma_t$ decays to $0$ at $t\to\infty$ to be less biased, and it is adaptive in the sense that $\bm{\tilde{\gamma}}_t$ depends on $\bm{\tilde{g}}_t$ so that the variables not getting gradient updates are not regularized. Thus, the L2 regularization is robust to sparse gradients, and it does not introduce any additional hyper-parameter.
We will give a heuristic derivation of the form of $\gamma_t$ in \S\,\ref{sec:derivation}.

\emph{Decay factors}:  $\bm{\tilde{d}}_t,\,\bm{\tilde{c}}_t$ are per-parameter decay factors such that $d_0=c_0=1$ and $d_t,\,c_t$ monotonically decrease to $0$ at $t\to\infty$. We provide a theoretical derivation of the asymptotic behavior of these factors in \S\,\ref{sec:amos}, together with a default form that works well empirically in all our experiments.
The decay factors do not depend on a maximum number of training steps, thus enabling arbitrary continuous training.

\emph{Memory Reduction}: 
Most previous optimizers operate element-wise, so the slot variables (e.g.~the running average $\bm{\tilde{v}}_t,\,\bm{\tilde{m}}_t$ in Adam) have the same shape as $\bm{\tilde{\theta}}$, which can be memory consuming. In Amos, two slot variables ($\bm{\tilde{v}}_t,\,\bm{\tilde{b}}_t$ in Algorithm~\ref{alg:amos}) are shared by certain slices in the model weights, reducing the memory usage of these slot variables. For example, if $\mathbb{R}^{m\times n}\ni\mW\subseteq\bm{\tilde{\theta}}$ is a linear transformation, the corresponding $\vv_t\in\mathbb{R}^{1\times n}$ is shared by the input dimension of $\mW$, reducing the memory usage by $m$ times. As a result, in \Eqref{eq:amos_update_rule} and Algorithm~\ref{alg:amos}, $v_t,\,b_t,\,c_t$ and $d_t$ are reduced and become scalars, to be used and updated by vector-valued $\vg_t$ and $\bm{\theta}_t$. In this work, we reduce the input dimension of linear transformations, the embed dimension of embedding matrix, and all dimensions for other variables by default. An ablative study with different settings is found in \S\,\ref{sec:memory_reduction}.


\begin{algorithm}\setstretch{1.25}
\caption{The \emph{Amos} optimizer at step $t$.}
\label{alg:amos}
\begin{algorithmic}[1]
\renewcommand{\algorithmicrequire}{\textbf{Input}}
\Require $\bm{\tilde{g}}_t=\nabla\ell(z_t;\bm{\tilde{\theta}}_t)$: The gradient of loss $\ell$ on a random example $z_t$.
\Require $\bm{\tilde{\theta}}_t$: Trainable model weights at step $t$.
\Require $\bm{\tilde{v}}_{t-1}$, $\bm{\tilde{b}}_t$: Slot variables of shape broadcastable to $\bm{\tilde{\theta}}$, initialized to $\bm{0}$.
\Require (Optional) $\bm{\tilde{m}}_t$: Slot variable of the same shape as $\bm{\tilde{\theta}}$, initialized to $\bm{0}$ for momentum.
\renewcommand{\algorithmicrequire}{\textbf{Hyper-parameter}}
\Require $\xi$: Global learning-rate. 
\Require $\bm{\tilde{\eta}}$: Expected scale for model weights $\bm{\tilde{\theta}}$.
\Require $\bm{\tilde{c}}_t$: Decay factor for L2 regularization. Defaults to $c_t=\big(1+\frac{1}{4}\sqrt{\xi}b_t\big)^{-\frac{1}{2}}$.
\Require $\bm{\tilde{d}}_t$: Decay factor for learning-rate. Defaults to $d_t=\big(1+\frac{1}{4}\sqrt{\xi\eta}b_t\big)^{-1}$.
\Require $\beta\in[0,1)$: Exponential decay rate for running average $\bm{\tilde{v}}_{t}$.\vspace{2pt}


\State (Optional) $g_t\gets\dfrac{\chi}{\max(\chi,\;\lvert g_t\rvert)}g_t$\Comment{Gradient clipping with hyper-parameter $\chi>0$.}\vspace{2pt}
\State $v_t \gets \beta v_{t-1}+(1-\beta)\rms(\vg_t)^2$ \Comment{Running average of squared gradients.}\vspace{2pt}
\State $\hat{v}_t \gets v_t/(1-\beta^t)$ \Comment{Bias correction.}\vspace{2pt}
\State $\gamma_t \gets c_t\dfrac{\xi^2}{\hat{v}_t}\rms(\vg_t)^2$
\Comment{Adaptive L2 regularization strength.}\vspace{2pt}
\State 
$\bm{\delta}_t \gets 
d_t\Big(\dfrac{\xi\eta}{\sqrt{\hat{v}_t}}\vg_t+\dfrac{1}{2}\gamma_t\bm{\theta}_{t}\Big)$
\Comment{Amos update rule.}\vspace{2pt}
\State $b_{t+1} \gets b_{t} + \gamma_t(1+b_t)$ \Comment{Decay factor update.}\vspace{2pt}
\State (Optional) $\delta_t \gets m_{t+1} \gets \mu m_t+(1-\mu)\delta_t$ \Comment{Momentum with hyper-parameter $\mu\in[0,1)$.}
\renewcommand{\algorithmicensure}{\textbf{Output:}}
\Ensure Updated model weights $\theta_{t+1} \gets \theta_{t} - \delta_t$.
\Ensure Updated slot variables $\bm{\tilde{r}}_{t}$, $\bm{\tilde{b}}_{t+1}$ and optional $\bm{\tilde{m}}_{t+1}$.
\end{algorithmic}
\end{algorithm}

\vspace{-0.8em}
\paragraph{Hyper-parameter Tuning}
The running average $v_t$ in Amos is a low-cost estimator for $\mathbb{E}[\rms(\vg_t)^2]$, where the expectation is taken over the example $z_t$ randomly drawn from the training data. It is similar to $v_t$ in Adam except that the mean square $\rms(\vg_t)^2$ is used instead of element-wise $g_t^2$, due to the memory reduction. Thus, the hyper-parameter $\beta$ behaves similarly to $\beta_2$ in Adam: Since the estimator mostly depends on the previous $1/(1-\beta)$ steps, $\beta$ should be close enough to $1$ to make the estimator accurate, but not too large that the model weights in the previous $1/(1-\beta)$ steps differ too much from the current step.
We set $\beta=0.999$ by default (the same as $\beta_2$ in Adam), and it is found that $\beta$ should be smaller with larger batch size \citep{DBLP:conf/icml/ShazeerS18,roberta}.

The global learning-rate $\xi$ can depend on the step $t$ to follow a warm-up schedule at the beginning of training, but a schedule with learning-rate decay is not necessary, since the decay factor $\bm{\tilde{d}}_t$ is already included in Algorithm~\ref{alg:amos}; most of the time $\xi$ remains a constant. While this constant is the major hyper-parameter to be tuned, a good rule of thumb is to set $\xi$ to the same order of magnitude as $1/\sqrt{N}$, where $N$ is the number of independent batches in the training set (see \S\,\ref{sec:derivation} for a justification). This value is usually larger than the typical learning-rates used for Adam. It also implies that $\xi$ should be in proportion to the square-root of the batch size, which we observe in practice as well (\S\,\ref{sec:detailed_exp_settings}).

In addition, Algorithm~\ref{alg:amos} includes optional gradient clipping and momentum. Momentum in Amos is applied \emph{after} the main update rule (unlike Adam which applies it before). It can improve performance for pre-training Transformer variants, but consume memory because the slot variable $\bm{\tilde{m}}_t$ must have the same shape as $\bm{\tilde{\theta}}$. When momentum is applied, its decay rate $\mu$ is typically set to $0.9$.

\section{Related Work}

Besides RMSProp \citep{rmsprop}, Adam \citep{adam} and AdamW \citep{adamw}, the number of previous works on optimization is vast, so we focus on some directly related alternatives below. Also, we note that Amos is a stochastic first-order optimizer, in contrast to recent progress in the second-order optimization methods \citep{shampoo}. The convergence of stochastic optimizers has been studied in terms of stochastic approximation \citep{bottou-online-1998}, regret \citep{ocobook}, or nonconvex stochastic programming \citep{DBLP:journals/siamjo/GhadimiL13a}. In particular, \citet{amsgrad} observed cases of non-convergence of Adam (with constant learning-rate) and proposed a fix. In our work, we analyze the behavior of Amos in an intuitive and heuristic manner, 
but leave a rigorous convergence proof (e.g.~based on regret) 
to future work.


\vspace{-0.8em}
\paragraph{AdaGrad:} The update rule of AdaGrad \citep{adagrad} is $\delta_t \gets \frac{\alpha}{\sqrt{b_t}}g_t$, where $b_{t+1} \gets b_{t} + g_t^2$ is similar to the $b_t$ in Algorithm~\ref{alg:amos}, in the sense that both AdaGrad and Amos use a (weighted) sum of squared gradients to decay learning-rates. Such decay is ``adaptive'' because the learning-rate will decay more for parameters getting more updates, which is suitable for sparse gradients. On the other hand, conventional wisdom is that the learning-rate in AdaGrad might decay ``too fast'' in some cases, which makes the convergence slow, and Adam mitigates this issue by using a running average of squared gradients instead of the decay factor. However, the AdamW setting suggests that the normalization factor by running average of squared gradients is not a replacement for learning-rate decay; one still needs a linearly decaying learning-rate schedule for better convergence. Thus, Amos integrates both the Adam-style gradient normalization and the AdaGrad-style learning-rate decay; with gradient normalization, the learning-rate can actually decay faster and it converges faster.

\vspace{-0.8em}
\paragraph{SGD with L2 Regularization:} For the classic Stochastic Gradient Descent (SGD) algorithm, it is recommended to decay the learning-rate by the factor $\frac{1}{\lambda t}$, where $\lambda$ is the smallest eigen-value of the Hessian \citep{Murata1998ASS}. Although $\lambda$ is generally unknown, adopting an L2 regularizer of strength $\lambda'$ guarantees that $\lambda\geq\lambda'$, so one can set the learning-rate to $\frac{1}{\lambda' t}$  \citep{bottou-tricks-2012}. In Amos, we adopt a similar idea to heuristically derive the learning-rate decay (see \S\,\ref{sec:sgd} for more detailed discussion), by connecting the decaying speed with the strength of L2 regularization (i.e., the L2 strength $\gamma_t$ in Algorithm~\ref{alg:amos} also appears in the update of $b_t$). Unlike SGD, both the learning-rate and L2 regularization in Amos decay adaptively. The adaptive L2 regularization, in particular, is a novel component unseen in previous optimizers.

\vspace{-0.8em}
\paragraph{LAMB:} The LAMB optimizer \citep{You2020Large} and its origin LARS \citep{DBLP:journals/corr/abs-1708-03888} share several similar aspects with Amos. The idea of layer-wise learning-rate in LAMB and LARS is similar to the per-variable learning-rate $\bm{\tilde{\eta}}$ in Amos; they all normalize the gradients in some way; and they all imply scaling up the learning-rate as the batch size increases. In our experiments, scaling the global learning-rate of Amos in proportion to the square-root of the batch size indeed works (\S\,\ref{sec:detailed_exp_settings}), although we leave a
systematic study of scaling-up to extremely large batch sizes and comparing with LAMB and LARS to future work.

\vspace{-0.8em}
\paragraph{AdaFactor:} In Adam, the slot variable $\bm{\tilde{v}}_t$ for maintaining running average of squared gradients requires the same amount of memory as the model weights $\bm{\tilde{\theta}}$. In order to reduce the memory usage, AdaFactor \citep{DBLP:conf/icml/ShazeerS18} proposes to use nonnegative matrix factorization to decompose any matrix into two vectors. In contrast, Amos reduces memory usage by simply reducing some axes of the slot variables and broadcasting to the shape of model weights. This reduction is more efficient than AdaFactor, and our experiments suggest that it will not degrade performance (\S\,\ref{sec:memory_reduction}).

\section{Derivation of Amos}
\label{sec:derivation}



In this section, we heuristically derive the Amos update rule (\Eqref{eq:amos_update_rule}). We start from a general form of the weight update for a given variable $\bm{\theta}$,
\begin{align}
\label{eq:online}
\bm{\theta}_{t+1} = \bm{\theta}_{t} - \alpha_t \vg_t\quad\mbox{where}\quad\bm{\tilde{g}}_t=\nabla\ell(z_t;\bm{\tilde{\theta}}_t),
\end{align}
and gradually pin down to the specific form of \Eqref{eq:amos_update_rule}. Here, the step size $\alpha_t>0$ is a scalar (due to our memory reduction mechanism in \S\,\ref{sec:algorithm}) and is shared across the elements of the vector-valued $\vg_t,\bm{\theta}_{t}\in\mathbb{R}^k$. We are focusing on a subset of model parameters, but furthermore note that $\alpha_t$  may differ for different variables.


Then, the following Descent Lemma \citep{Murata1998ASS} provides a sanity check for a wide range of possible forms of $\alpha_t$, while also suggests some constraints. Its proof can be found in \S\,\ref{sec:proof_lemmas}.

\begin{lemma}[Descent Lemma]
\label{lem:descent}
If $\alpha_t$ does not depend on $z_t$, then there exists $\epsilon_t>0$ such that
\[
\mathbb{E}_{t}[\mathbb{E}_{t+1}[\ell(z_{t+1};\bm{\tilde{\theta}}_{t+1})]]\leq\mathbb{E}_t[\ell(z_t;\bm{\tilde{\theta}}_{t})]\;\; \text{ for any }\; \alpha_t<\epsilon_t,
\]
where $\mathbb{E}_t[\bullet]$ denotes the expectation taken over the random example $z_t$ drawn from the training data at step $t$, while conditioned on $z_{t-1},\ldots,z_0$ of the previous steps. 
\end{lemma}
In light of Lemma~\ref{lem:descent}, we require \emph{({\rm I}) $\alpha_t$ does not depend on $z_t$ (but may differ for different variables)}, and \emph{({\rm II}) $\alpha_t$ decays to $0$ at $t\to\infty$}, so the step-size can be sufficiently small that the Descent Lemma applies and \Eqref{eq:online} will always make progress on average.

In the Amos update rule, $\alpha_t=d_t\frac{\xi\eta}{\sqrt{\hat{v}_{t}}}$ and $\hat{v}_{t}$ depends on $z_t$, which seems to violate requirement (I) above. However, $\hat{v}_{t}$ should be regarded as an approximation of $\mathbb{E}[\rms(\vg_t)^2]$, where $\mathbb{E}[\bullet]$ denotes the expectation taken over examples randomly drawn from the training data, which is $z_t$ independent.




Next, we add an L2-regularization term to \Eqref{eq:online}:
\begin{align}
\label{eq:online_with_l2}
\bm{\theta}_{t+1} = \bm{\theta}_{t} - (\alpha_t\vg_t + \rho_t\bm{\theta}_t)
\end{align}
where $\rho_t\geq 0$ can depend on $\vg_t$ (hence ``adaptive''), but we require \emph{({\rm III}) $\mathbb{E}[\rho_t]$ does not depend on $\vg_t$}. The intuition behind is that an L2-regularization should have the same strength across all variables, rather than be affected by the typical gradient magnitude on each variable. It is the same intuition that motivates the weight decay decoupled from gradient adaptive factors \citep{adamw}.

The first challenge for Amos is to keep a balance between $\alpha_t$ and $\rho_t$, so that $\rms(\bm{\theta}_t)$ will converge to the pre-specified, per-variable hyper-parameter $\eta$. In order to achieve this, we will declare some intuitions on the largeness of $\vg_t$, $\mathbb{E}[\vg_t]$ and $\rho_t\bm{\theta}_t$, as a guide for our heuristic derivation. For deep neural networks, $\vg_t$'s upon different $z_t$'s appear to be randomly noisy, so \emph{they will cancel out} when being averaged to $\mathbb{E}[\vg_t]$; which means that $\rms(\mathbb{E}[\vg_t])$ is usually much smaller than $\rms(\vg_t)$. On the other hand, $\bm{\theta}_t$ does not depend on $z_t$, and it changes slowly between different steps, so the update by $\rho_t\bm{\theta}_t$ is easier to accumulate than $\alpha_t\vg_t$. This means that the magnitude of $\rho_t\bm{\theta}_t$ can be kept smaller than $\alpha_t\vg_t$ while still compete with $\alpha_t\mathbb{E}[\vg_t]$. In Amos, $\rho_t=d_t\frac{1}{2}c_t\frac{\xi^2}{\hat{v}_{t}}\rms(\vg_t)^2$ decays to $0$ faster than $\alpha_t$ (due to the extra decay factor $c_t$), which we assume will make $\rho_t\bm{\theta}_t$ small enough compared to $\alpha_t\vg_t$, when $t$ is large.



Quantitatively, we consider the error $\bm{\tilde{\varepsilon}}_t=\bm{\tilde{\theta}}_{t}-\bm{\tilde{\theta}}^\ast$, where $\bm{\tilde{\theta}}^\ast$ is a local minimum. \Eqref{eq:online_with_l2} implies
\begin{align*}
\rms(\bm{\varepsilon}_{t+1})^2 &= \rms(\bm{\varepsilon}_{t})^2 - \frac{2}{k}(\alpha_t\vg_t + \rho_t\bm{\theta}_t)\cdot\bm{\varepsilon}_{t} + \rms(\alpha_t\vg_t + \rho_t\bm{\theta}_t)^2 \\
&\approx
\rms(\bm{\varepsilon}_{t})^2 - \frac{2}{k}(\alpha_t\vg_t + \rho_t\bm{\theta}_t)\cdot\bm{\varepsilon}_{t} + \alpha_t^2\rms(\vg_t)^2,
\numberthis\label{eq:error_with_l2}
\end{align*}
where we investigate a time point $t$ large enough that the model nearly converges. At this point, $\rho_t\bm{\theta}_t$ is small compared to $\alpha_t\vg_t$, so it can be approximately omitted in the third term. And we should have $\mathbb{E}[\vg_t]\approx\bm{0}$ and $\rms(\bm{\theta}_t)\approx\eta$ if the trained weights converge to scale $\eta$. So taking $\mathbb{E}[\bullet]$ of \Eqref{eq:error_with_l2}, we should get
\begin{align*}
\mathbb{E}[\rms(\bm{\varepsilon}_{t+1})^2]\approx\rms(\bm{\varepsilon}_{t})^2 - \frac{2}{k}\mathbb{E}[\rho_t]\bm{\theta}_t\cdot\bm{\varepsilon}_{t} + \alpha_t^2\mathbb{E}[\rms(\vg_t)^2].
\end{align*}
Furthermore, in order for the model to converge, we should have $\mathbb{E}[\rms(\bm{\varepsilon}_{t+1})^2]\leq\rms(\bm{\varepsilon}_{t})^2$ from the above. Hence, we should have
\begin{align*}
\alpha_t^2\mathbb{E}[\rms(\vg_t)^2]\leq\frac{2}{k}\mathbb{E}[\rho_t]\bm{\theta}_t\cdot\bm{\varepsilon}_{t}\leq 2\mathbb{E}[\rho_t]\rms(\bm{\theta}_t)\rms(\bm{\varepsilon}_{t})\approx 2\mathbb{E}[\rho_t]\eta\rms(\bm{\varepsilon}_{t})
\end{align*}
as a necessary condition for the trained weights to converge to scale $\eta$. By setting $\rho_t$ to the smallest possible, we get
\begin{align}
\label{eq:set_rhot}
2\rho_t\eta\rms(\bm{\varepsilon}_{t})=\alpha_t^2\rms(\vg_t)^2,
\end{align}
which is an important relation connecting $\rho_t$ to $\alpha_t$. We require \emph{({\rm IV}) \Eqref{eq:set_rhot} to be satisfied throughout the course of training}, and use it ubiquitously in our derivation. It is out of the scope of this work to prove whether \Eqref{eq:set_rhot} actually makes $\rms(\bm{\theta}_t)$ converge to $\eta$; but the requirements so far already determine a basic form of the Amos update rule (as shown in Lemma~\ref{lem:amos_basic_form} below), and our experiments suggest that Amos indeed brings the trained weights to the specific scale (\S\,\ref{sec:scale_params}).

\begin{lemma}[Basic Form of Amos]
\label{lem:amos_basic_form}
Assume \Eqref{eq:set_rhot}, requiring that $\alpha_t$ does not depend on $z_t$ and $\mathbb{E}[\rho_t]$ does not depend on $\vg_t$. Then, we have
\begin{align*}
\alpha_t\propto\frac{1}{\sqrt{\mathbb{E}[\rms(\vg_t)^2]}}\quad\text{ and }\quad\rho_t\propto\frac{\rms(\vg_t)^2}{\mathbb{E}[\rms(\vg_t)^2]}.
\end{align*}
\end{lemma}
The proof is found in \S\,\ref{sec:proof_lemmas}. It is noteworthy that the Adam-style gradient normalization naturally occurs in $\alpha_t$. Based on Lemma~\ref{lem:amos_basic_form}, Amos is derived by specifying the initial learning-rate and decay schedule. For that, we need the following assumption to quantify the largeness of $\vg_t$ and $\mathbb{E}[\vg_t]$.
\begin{assumption}
\label{ass:xi}
A scalar $\xi>0$ exists such that 
$\dfrac{\rms(\mathbb{E}[\vg_t])}{\sqrt{\mathbb{E}[\rms(\vg_t)^2]}}\geq\xi$ for all $t$ and across all variables.
\end{assumption}
This assumption formalizes two intuitions, i.e.~randomly noisy $\vg_t$ will cancel out when being averaged to $\mathbb{E}[\vg_t]$ (so $\xi$ has a small value), and as the training proceeds, $\rms(\vg_t)$ will decrease\footnote{As the training proceeds, $\mathbb{E}[\vg_t]$ will converge to $\approx\bm{0}$, so Assumption~\ref{ass:xi} is related to the observation that, for highly expressive models, $\bm{\tilde{g}}_t=\nabla\ell(z_t;\bm{\tilde{\theta}}^\ast)$ can get close to $\bm{0}$ for every $z_t$ in the training data \citep{pmlr-v80-ma18a}. However, Assumption~\ref{ass:xi} only requires that $\rms(\vg_t)$ decreases as fast as $\mathbb{E}[\vg_t]$, which is empirically verified (\S\,\ref{sec:verify_ass_xi}). Whether $\rms(\vg_t)$ actually converges to $0$ is not guaranteed (because the training may stop early, or $\mathbb{E}[\vg_t]$ not get to exactly $\bm{0}$ due to L2-regularization, etc.) and not used in our theory. On the other hand, $\rms(\vg_t)$ is always large compared to $\mathbb{E}[\vg_t]$, because $\xi$ is a small value.}
along with $\rms(\mathbb{E}[\vg_t])$ (so the ratio remains larger than a constant $\xi>0$). Assumption~\ref{ass:xi} is verified by our experiments (\S\,\ref{sec:verify_ass_xi}).

The value of $\xi$ is related to the global learning-rate in Amos (as shown in Lemma~\ref{lem:initial_learning_rate} below), which is tuned as a hyper-parameter in practice. However, we also provide an intuitive estimation of $\xi$, which is usually a good start for hyper-parameter tuning. The intuition is to view the canceling out of $\vg_t$ averaged to $\mathbb{E}[\vg_t]$ as similar to the average of $N$ i.i.d.~samples drawn from a distribution of mean $\bm{0}$. According to the Law of Large Numbers, the variance of the average (i.e.~$\rms(\mathbb{E}[\vg_t])^2$) is about $1/N$ of the variance of the distribution (i.e.~$\mathbb{E}[\rms(\vg_t)^2]$), so $\xi\approx\frac{1}{\sqrt{N}}$. In reality, the gradients of deep neural networks, computed over mini-batches, appear to be highly random. So $N$ is usually of the same order of magnitude as the number of independent batches in the training data.

Now, we can derive the optimal initial learning-rate as below, under an ideal condition that $\mathbb{E}[\vg_0]$ points to the same direction as $\bm{\varepsilon}_{0}$. The proof is found in \S\,\ref{sec:proof_lemmas}.

\begin{lemma}[Initial Learning-rate]
\label{lem:initial_learning_rate}
Assume \Eqref{eq:online}, Assumption~\ref{ass:xi}, $\alpha_0=\alpha/\sqrt{\mathbb{E}[\rms(\vg_0)^2]}$ and that $\mathbb{E}[\vg_0]$ points to the same direction as $\bm{\varepsilon}_{0}$. Then,
\begin{align*}
\mathbb{E}[\rms(\bm{\varepsilon}_{1})^2]\leq \rms(\bm{\varepsilon}_{0})^2 - 2\alpha\xi\rms(\bm{\varepsilon}_{0}) + \alpha^2
\end{align*}
and the RHS achieves minimum at $\alpha=\xi\rms(\bm{\varepsilon}_{0})\approx\xi\eta$.
\end{lemma}
Lemma~\ref{lem:initial_learning_rate} suggests the initial learning-rate $\alpha_0=\frac{\xi\eta}{\sqrt{\mathbb{E}[\rms(\vg_0)^2]}}$. Then, we get $\rho_0=\frac{1}{2}\xi^2\frac{\rms(\vg_0)^2}{\mathbb{E}[\rms(\vg_0)^2]}$ from \Eqref{eq:set_rhot}. By adding the decay factors, we reveal the Amos update rule (\Eqref{eq:amos_update_rule}):
\begin{align}
\label{eq:amos_rate}
\bm{\delta}_{t} \gets \alpha_t\vg_t + \rho_t\bm{\theta}_t,\;\mbox{where}\;
\alpha_t=d_t\frac{\xi\eta}{\sqrt{\mathbb{E}[\rms(\vg_t)^2]}}\;\mbox{and}\; \rho_t=d_t\frac{1}{2}\gamma_t=d_t\frac{1}{2}c_t\xi^2\frac{\rms(\vg_t)^2}{\mathbb{E}[\rms(\vg_t)^2]}.
\end{align}
Here, $d_t$ and $c_t$ monotonically decrease to $0$ and $d_0=c_0=1$. In particular, $c_t$ decaying to $0$ ensures that $\rho_t$ decays to $0$ faster than $\alpha_t$, so $\rho_t\bm{\theta}_t$ can be sufficiently small compared to $\alpha_t\vg_t$ for large $t$, which justifies the approximation of \Eqref{eq:error_with_l2}. In \S\,\ref{sec:amos}, we will further derive that $c_t=(1+p b_t)^{-\frac{1}{2}}$ and $d_t=(1+q b_t)^{-1}$, where $p,q$ are constants, together with the update rule of $b_t$. The specific $p=\frac{1}{4}\sqrt{\xi}$ and $q=\frac{1}{4}\sqrt{\xi\eta}$ are found through experiments and work well in practice.

\begin{figure}[b!]
\centering
\includegraphics[scale=0.55,viewport=0 0 10.0in 4.0in,clip]{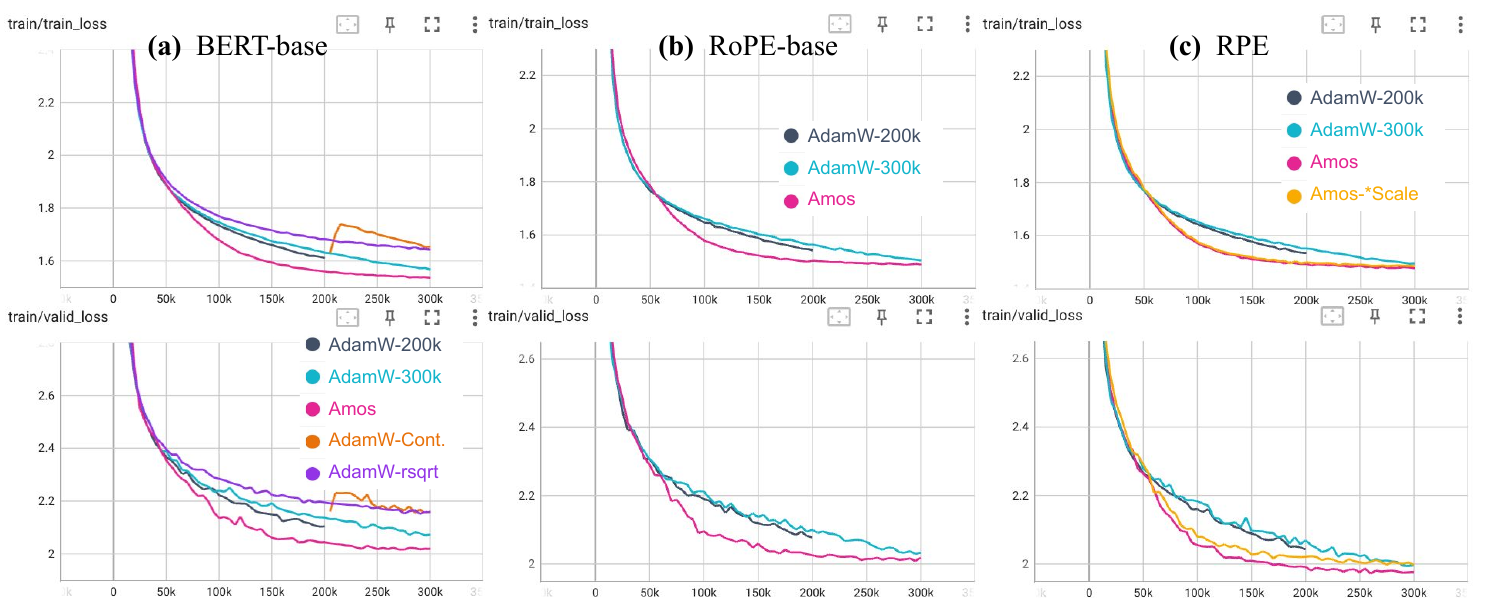}
\caption{Pre-training 3 models of the base (12-layer 768-hidden) size: (a) BERT, (b) RoPE and (c) RPE. We show training loss on the top and validation loss on the bottom.} 
\label{fig:pretrain1}
\end{figure}

\section{Experiments}
\label{sec:experiments}

We focus on the Transformer model \citep{transformer}, and pre-train several variants as below.
\vspace{-0.8em}
\paragraph{BERT:} A Transformer Encoder model with learned position embeddings~\citep{bert}. We experiment with the base (12-layer 768-hidden) and large (24-layer 1024-hidden) model sizes.
\vspace{-0.8em}
\paragraph{RoPE:} A Transformer Encoder variant with the Rotary Position Encoding \citep{DBLP:journals/corr/abs-2104-09864}. RoPE is integrated in some recent large-scale language models \citep{palm}. It encodes relative positions but the encoding is not learned. We experiment with the base (12-layer 768-hidden) and large (24-layer 1024-hidden) model sizes.
\vspace{-0.8em}
\paragraph{Relative Position Embeddings (RPE):} A Transformer Encoder variant with learned relative position embeddings \citep{rpe}. It achieves better performance but the pre-training is more costly on TPU \citep{DBLP:journals/corr/abs-2108-13032}. We experiment with the base (12-layer 768-hidden) model size.
\vspace{-0.8em}
\paragraph{T5 Encoder-Decoder (T5):} A Transformer Encoder-Decoder model implemented by \citet{t5}.
We experiment with the large (24-layer 1024-hidden) model size.

\begin{figure}[t!]
\centering
\includegraphics[scale=0.55,viewport=0 0 10.0in 4.0in,clip]{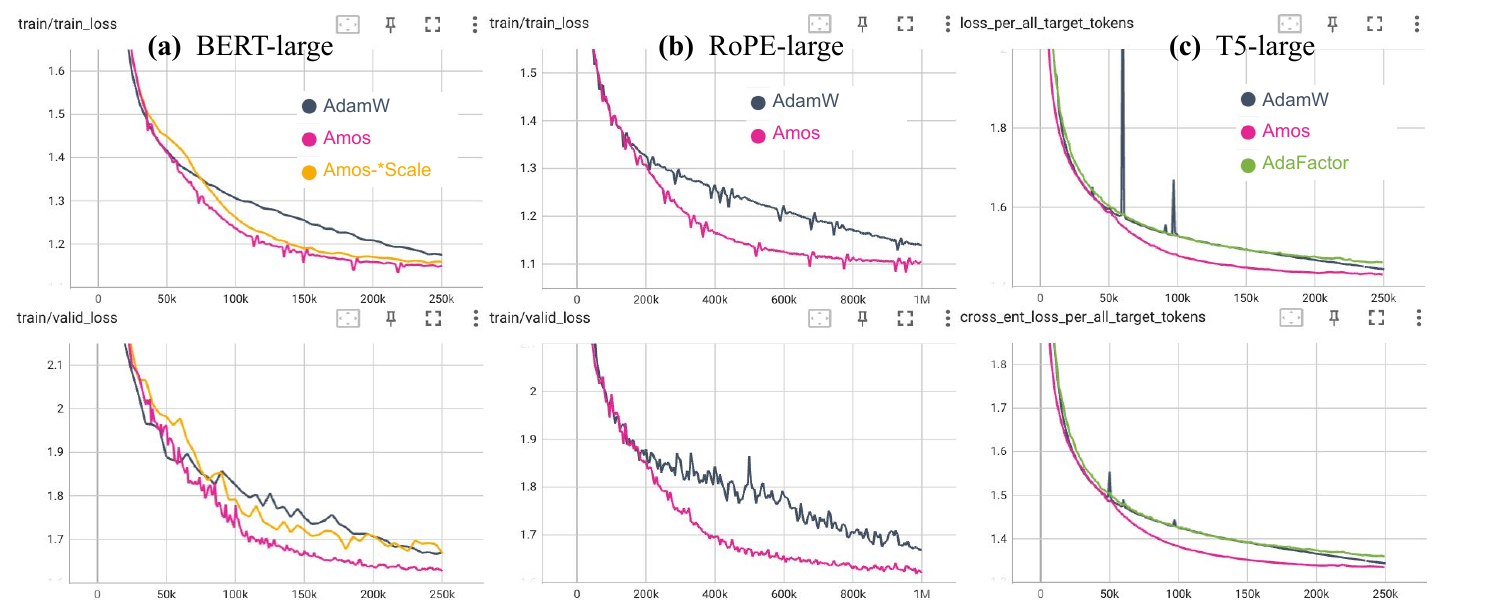}
\caption{Pre-training 3 models of the large (24-layer 1024-hidden) size: (a) BERT, (b) RoPE and (c) T5. We show training loss on the top and validation loss on the bottom.
For T5, the training loss is different from the cross-entropy loss due to an extra regularization term. See \S\,\ref{sec:detailed_exp_settings} for details.}
\label{fig:pretrain2}
\end{figure}

For encoder only models, we pre-train with the Masked Language Modeling loss \citep{bert} on Wikipedia\footnote{\url{https://en.wikipedia.org/wiki/Main_Page}} and the Books Corpus \citep{DBLP:conf/iccv/ZhuKZSUTF15}. Following \citet{roberta}, we use batch size 1024 for base-sized models, and pre-train 200k or 300k steps. For BERT-large, we use batch size 4096 and pre-train 250k steps. For RoPE-large, due to memory limitations we have to use batch size 1024 and pre-train 1M steps. For T5, the batch size is 4096 and we pre-train with the Span Corruption loss on the C4 corpus \citep{t5}, for 250k steps. More detailed settings are found in \S\,\ref{sec:detailed_exp_settings}.

As an additional evaluation, we also applied Amos to the ResNet model \citep{resnet} on the ImageNet \citep{imagenet} dataset. The experiment settings and results are shown in \S\,\ref{sec:resnet50_imagenet}.

\subsection{Learning Curve of Pre-training Transformer Variants}
\label{sec:pretrain_convergence}

In Figure~\ref{fig:pretrain1} and Figure~\ref{fig:pretrain2}, we show training and validation loss of pre-training the Transformer variants. In all experiments, across different model architectures, model sizes, batch sizes, datasets and loss functions, Amos (pink curve) outperforms the state-of-the-art AdamW setting, with the loss always significantly lower beyond 30\% of the training procedure\footnote{We have tried different learning-rates in preliminary experiments and the best was chosen. A learning-rate search for BERT-base is presented in \S\,\ref{sec:detailed_exp_settings}.}, and the validation loss achieving the final value of AdamW-300k within $<70\%$ training steps or time\footnote{In our JAX \citep{jax2018github} implementation, the running time per training step for all optimizers (AdamW, Amos and AdaFactor) are almost the same.}. For BERT-base (Figure~\ref{fig:pretrain1}a), Amos achieves the same within only 145k steps ($<50\%$), and the Amos checkpoint at 150k outperforms the final checkpoint of AdamW-300k in fine-tuning on MNLI \citep{williams-etal-2018-broad} as well (\S\,\ref{sec:finetune}). 

\begin{figure}[b!]
\centering
\includegraphics[scale=0.55,viewport=0 0 9.6in 2.2in,clip]{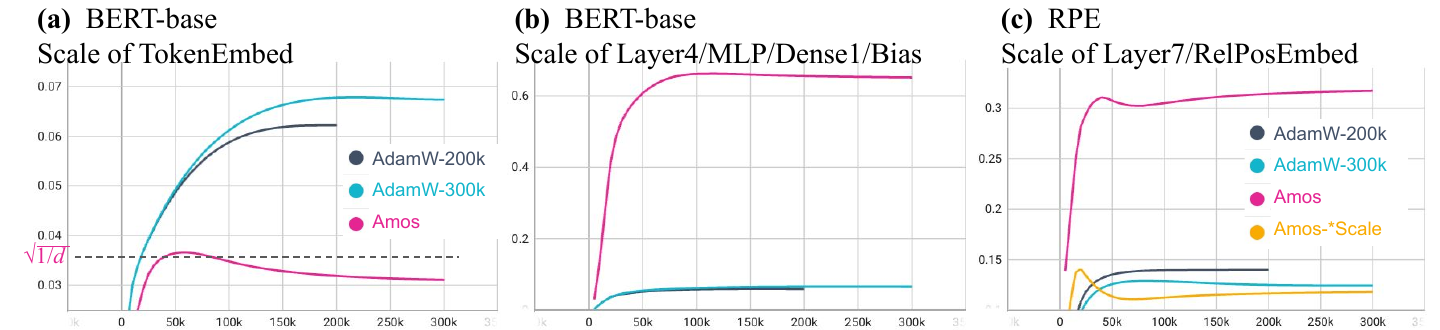}
\caption{Plots of the quadratic mean of entries of variables over pre-trained steps.}
\label{fig:scale}
\end{figure}


In Figure~\ref{fig:pretrain1}a, we also tried starting from the final checkpoint of AdamW-200k and resetting the learning-rate as if it is linearly decaying to max training step 300k (AdamW-Cont.). The loss spikes higher and does not go further lower than the value at 200k, suggesting that the hyper-parameter of max training steps has to be set a priori, and continuous training is not trivial with AdamW. In addition, we tried a learning-rate schedule (AdamW-rsqrt) that takes the same value at step 10k but adopts a decay in proportion to $t^{-1/2}$ (where $t$ is the step) beyond. Although this setting does not require max training steps, it converges slower than both AdamW-200k and AdamW-300k.

For the RPE model (Figure~\ref{fig:pretrain1}c), we tried setting $\eta$ of the relative position embeddings to a smaller value (Amos-*Scale, see \S\,\ref{sec:calc_eta} for more details), and found significant impact especially on the validation loss. Similar results are observed when we change $\eta$ for a certain type of layers in the BERT-large model (Figure~\ref{fig:pretrain2}a, Amos-*Scale, see \S\,\ref{sec:calc_eta}). It suggests that the model-specific information $\bm{\tilde{\eta}}$ indeed contributes to the performance of Amos, which according to previous work \citep{DBLP:journals/corr/abs-2001-08361} is unlikely achieved by tuning the learning-rate schedule alone.

\begin{figure}[t!]
\centering
\includegraphics[scale=0.55,viewport=0 0 10.0in 2.0in,clip]{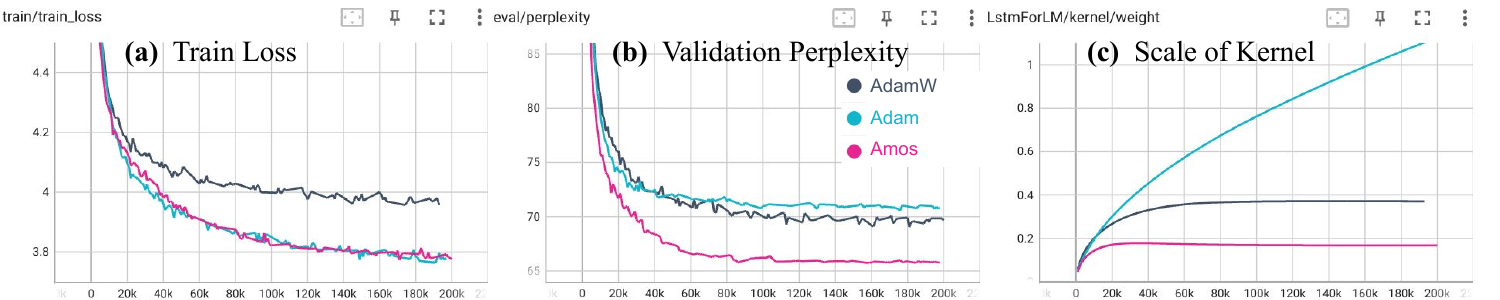}
\caption{Training a single layer LSTM on the PTB corpus.}
\label{fig:lstm}
\end{figure}

\subsection{Scales of Trained Variables}
\label{sec:scale_params}

In Figure~\ref{fig:scale} we show how the scale of entries of some variables evolve as the training proceeds. With AdamW, both the token embeddings and the bias converge to similar scales (Figure~\ref{fig:scale}ab); while with Amos the token embeddings converge to $\approx\sqrt{1/d}$ (where $d$ is the hidden size) and the bias to $\approx 0.5$, as specified by the hyper-parameter $\bm{\tilde{\eta}}$. It shows that the algorithm of Amos can lead variables to converge to drastically different scales, which is unlikely with AdamW. In Figure~\ref{fig:scale}c, comparing Amos and Amos-*Scale, the relative position embeddings in a typical layer of the RPE model converge to different scales, which shows that the scale is indeed controlled by the hyper-parameter $\bm{\tilde{\eta}}$. Recall that Figure~\ref{fig:pretrain1}c shows this has impact on the performance.

In order to further illustrate the relation among the optimizer, validation performance and the scale of variables, we train a single layer LSTM on the Penn Tree Bank (PTB) corpus \citep{marcus-etal-1993-building}. The model size is $256$ for hidden states and $1024$ for memory. We set dropout rate $0.55$ for hidden states (which is important for training on PTB) and $0.1$ for memory. Sequence length and batch size are set to $64$. We compare Amos, AdamW, and Adam (without weight decay). For Amos, the global learning-rate is set to $0.01$ and $\eta$ for the LSTM kernel is set to $\frac{1}{\sqrt{32}}$ (calculated from input scale $\frac{1}{4}$, input dimension $512$ and output scale $1$). For AdamW and Adam, the learning-rate is set to $0.0015$ (about the same as Amos for the LSTM kernel), and the weight decay is set to $0.01$ for AdamW.

The results are shown in Figure~\ref{fig:lstm}. Without weight decay, the scale of the LSTM kernel trained by Adam can keep increasing; so Adam is better than AdamW on training loss but worse on validation perplexity (i.e.~the model trained by Adam generalizes worse). On the other hand, Amos achieves the same training loss as Adam, while keeping the scale of the kernel as specified. It results in a much better validation perplexity which matches the state-of-the-art\footnote{See \citet{Melis2020Mogrifier} for a setting that achieves the state-of-the-art performance for a single layer LSTM on PTB. It uses RMSProp and dynamically decays the learning-rate by watching the performance on the validation set. To our knowledge, no previous work has been able to achieve the state-of-the-art with a straightforward setting of the optimizer as we do with Amos.}. Overall, we conclude that controlling the scale of trained variables can help the generalization performance of deep neural networks, and the model-specific information from $\bm{\tilde{\eta}}$ enables Amos to do this.

\section{Conclusion}

We have presented the \emph{Amos} optimizer, which uses an adaptive L2 regularizer to control learning-rate decay and guides trained weights towards a specified model-oriented scale. It demonstrates faster convergence than the state-of-the-art in pre-training language models, where the training process is long and decaying schedule is crucial. On the other hand, its ability to control the scale of trained weights also brings better generalization to small models such as a single layer LSTM.

Besides pre-training, we expect Amos to have advantages in fine-tuning as well, especially for multi-modal models that combine heterogeneous components of varied scales and/or pre-trained with different recipes. Hopefully, the model-specific information $\bm{\tilde{\eta}}$ can help us fine-tune such models that were previously difficult with other optimizers \citep{DBLP:journals/corr/abs-2203-02053,DBLP:conf/iclr/KumarRJ0L22}. 


\paragraph{Ethics Statement} This work includes pre-training language models, which have the potential risk of inherited bias from the training data. Our empirical contribution is on accelerating the pre-training process and thus does not focus on addressing such risk. For fair comparison, the pre-training data we have used are the same as previous works, and consequently the models we trained to evaluate our approach are similar to those already open-sourced. We refer to \citet{foundation} for a discussion of the risks of pre-trained language models.



\paragraph{Reproducibility Statement} 
Proof of lemmas in \S\,\ref{sec:derivation} is given in \S\,\ref{sec:proof_lemmas}. Following the derivation of the Amos update rule, a heuristic derivation of the asymptotic behavior of the Amos decay factors is found in \S\,\ref{sec:amos}, and its connection with SGD is discussed in \S\,\ref{sec:sgd}. Assumption~\ref{ass:xi} in our derivation is verified by experiments in \S\,\ref{sec:verify_ass_xi}. We explain the calculation of $\bm{\tilde{\eta}}$ for the Transformer models in \S\,\ref{sec:calc_eta}. For the pre-training experiments in \S\,\ref{sec:pretrain_convergence}, we describe detailed settings in \S\,\ref{sec:detailed_exp_settings}, and present a learning-rate search for BERT-base as well. Fine-tuning experiments on MNLI are shown in \S\,\ref{sec:finetune}. Furthermore, an ablation test for the memory reduction settings of Amos is found in \S\,\ref{sec:memory_reduction}. Additional experiment settings and results of training the ResNet50 model on ImageNet are found in  \S\,\ref{sec:resnet50_imagenet}. Our code is open-sourced at: \url{https://github.com/google-research/jestimator}.

\bibliography{ref}
\bibliographystyle{iclr2023_conference}

\appendix
\section{Appendix}

\subsection{Proof of Lemmas}
\label{sec:proof_lemmas}

\begin{proof}[Proof of Lemma~\ref{lem:descent}]
We have
\begin{align*}
\mathbb{E}_{t+1}[\ell(z_{t+1};\bm{\tilde{\theta}}_{t+1})]&=\mathbb{E}_{t+1}[\ell(z_{t+1};\bm{\tilde{\theta}}_{t} - \bm{\tilde{\alpha}}_t\odot\bm{\tilde{g}}_t)] \\
&=\mathbb{E}_{t+1}[\ell(z_{t+1};\bm{\tilde{\theta}}_{t}) - (\bm{\tilde{\alpha}}_t\odot\bm{\tilde{g}}_t)\cdot\nabla\ell(z_{t+1};\bm{\tilde{\theta}}_{t})]+o(\bm{\tilde{\alpha}}_t),
\end{align*}
where $\odot$ denotes element-wise multiplication, $o(\bm{\tilde{\alpha}}_t)/\lVert\bm{\tilde{\alpha}}_t\rVert\to 0$ at $\lVert\bm{\tilde{\alpha}}_t\rVert\to 0$, and arrays are flattened to vectors for the dot-product.
Since $z_{t+1}$ and $z_t$ are drawn from the same distribution, we have $\mathbb{E}_{t+1}[\ell(z_{t+1};\bm{\tilde{\theta}}_{t})]=\mathbb{E}_{t}[\ell(z_{t};\bm{\tilde{\theta}}_{t})]$ and $\mathbb{E}_{t+1}[(\bm{\tilde{\alpha}}_t\odot\bm{\tilde{g}}_t)\cdot\nabla\ell(z_{t+1};\bm{\tilde{\theta}}_{t})]=(\bm{\tilde{\alpha}}_t\odot\bm{\tilde{g}}_t)\cdot\mathbb{E}_{t}[\nabla\ell(z_{t};\bm{\tilde{\theta}}_{t})]$. Moreover, because $\bm{\tilde{\alpha}}_t$ does not depend on $z_t$, we have $\mathbb{E}_{t}[\bm{\tilde{\alpha}}_t\odot\bm{\tilde{g}}_t]=\bm{\tilde{\alpha}}_t\odot\mathbb{E}_{t}[\bm{\tilde{g}}_t]$. Thus,
\begin{align*}
\mathbb{E}_{t}[\mathbb{E}_{t+1}[\ell(z_{t+1};\bm{\tilde{\theta}}_{t+1})]]=\mathbb{E}_{t}[\ell(z_{t};\bm{\tilde{\theta}}_{t})]-(\bm{\tilde{\alpha}}_t\odot\mathbb{E}_{t}[\bm{\tilde{g}}_t])\cdot\mathbb{E}_{t}[\nabla\ell(z_{t};\bm{\tilde{\theta}}_{t})]+o(\bm{\tilde{\alpha}}_t).
\end{align*}
Now $\mathbb{E}_{t}[\bm{\tilde{g}}_t]=\mathbb{E}_{t}[\nabla\ell(z_{t};\bm{\tilde{\theta}}_{t})]$ by definition, so $(\bm{\tilde{\alpha}}_t\odot\mathbb{E}_{t}[\bm{\tilde{g}}_t])\cdot\mathbb{E}_{t}[\nabla\ell(z_{t};\bm{\tilde{\theta}}_{t})]>0$, and the lemma follows by taking $\bm{\tilde{\alpha}}_t$ small enough so that $o(\bm{\tilde{\alpha}}_t)$ can be omitted. 
\end{proof}

\begin{proof}[Proof of Lemma~\ref{lem:amos_basic_form}]
In the LHS of \Eqref{eq:set_rhot}, only $\rho_t$ can depend on $z_t$; while in the RHS, $\rms(\vg_t)^2$ depends on $z_t$ but $\alpha_t$ does not. In order to satisfy \Eqref{eq:set_rhot} on every $z_t$, it is necessary that $\rho_t$ has a $\rms(\vg_t)^2$ factor: $\rho_t\propto\rms(\vg_t)^2$. Moreover, we require that $\mathbb{E}[\rho_t]$ does not depend on $\vg_t$, so $\rho_t$ should be normalized by $\mathbb{E}[\rms(\vg_t)^2]$: $\rho_t\propto\frac{\rms(\vg_t)^2}{\mathbb{E}[\rms(\vg_t)^2]}$. This, substituted back into \Eqref{eq:set_rhot}, implies that $\alpha_t\propto\frac{1}{\sqrt{\mathbb{E}[\rms(\vg_t)^2]}}$. 
\end{proof}

\begin{proof}[Proof of Lemma~\ref{lem:initial_learning_rate}]
\Eqref{eq:online} implies $\bm{\varepsilon}_{1}=\bm{\varepsilon}_{0}-\alpha_0\vg_0$. Taking $\mathbb{E}[\rms(\bullet)]$ of this equation, we have
\begin{align*}
\mathbb{E}[\rms(\bm{\varepsilon}_{1})^2] &= \rms(\bm{\varepsilon}_{0})^2 - \frac{2}{k}\alpha_t\mathbb{E}[\vg_0]\cdot\bm{\varepsilon}_{0} + \alpha_0^2\mathbb{E}[\rms(\vg_0)^2] \\
&= \rms(\bm{\varepsilon}_{0})^2 - \frac{2}{k}\alpha\frac{\mathbb{E}[\vg_0]\cdot\bm{\varepsilon}_{0}}{\sqrt{\mathbb{E}[\rms(\vg_0)^2]}} + \alpha^2.
\end{align*}
Since $\mathbb{E}[\vg_0]$ and $\bm{\varepsilon}_{0}$ point to the same direction, we have
$\frac{1}{k}\mathbb{E}[\vg_0]\cdot\bm{\varepsilon}_0=\rms(\mathbb{E}[\vg_0])\rms(\bm{\varepsilon}_0)$. By Assumption~\ref{ass:xi} we have $\rms(\mathbb{E}[\vg_0])/\sqrt{\mathbb{E}[\rms(\vg_0)^2]}\geq\xi$. Hence,
\begin{align*}
\mathbb{E}[\rms(\bm{\varepsilon}_{1})^2]\leq \rms(\bm{\varepsilon}_{0})^2 - 2\alpha\xi\rms(\bm{\varepsilon}_{0}) + \alpha^2.
\end{align*}
The RHS above is a quadratic function of $\alpha$, which achieves minimum at $\alpha=\xi\rms(\bm{\varepsilon}_{0})$. Finally, since (usually) $\bm{\theta}_0$ is initialized close to $\bm{0}$, and $\rms(\bm{\theta}^\ast)\approx\eta$, we have $\rms(\bm{\varepsilon}_0)\approx\eta$.
\end{proof}

\subsection{Heuristic Derivation of Decay Factors}
\label{sec:amos}

Substituting \Eqref{eq:amos_rate} into
\Eqref{eq:set_rhot}, we get the following equivalent of \Eqref{eq:set_rhot}:
\begin{align}
\label{eq:set_rhot_equiv}
c_t\rms(\bm{\varepsilon}_{t})=d_t\eta.
\end{align}
Without knowing any specific relation among $\vg_t,\,\bm{\theta}_t$ and $\bm{\varepsilon}_{t}$, we found it difficult to theoretically decide an optimal $c_t$. Given that $c_t$ decreases to $0$, we set $c_t$ to decrease according to $\rms(\bm{\varepsilon}_{t})$ in Amos, i.e.~$c_t\sim r\rms(\bm{\varepsilon}_{t})$, where $r$ is a constant and $\sim$ denotes asymptotically equal at $t\to\infty$. Thus, by \Eqref{eq:set_rhot_equiv} we have $d_t\sim \frac{r}{\eta}\rms(\bm{\varepsilon}_{t})^2$. We will analyze the evolution of $\rms(\bm{\varepsilon}_{t})^2$ to derive $c_t$ and $d_t$.

Taking $\mathbb{E}[\bullet]$ of \Eqref{eq:error_with_l2}, and applying \Eqref{eq:amos_rate} and \Eqref{eq:set_rhot_equiv}, we get
\begin{align}
\label{eq:error_for_decay}
\mathbb{E}[\rms(\bm{\varepsilon}_{t+1})^2]
\approx\rms(\bm{\varepsilon}_{t})^2 - \frac{2}{k}\Big(c_t\xi\rms(\bm{\varepsilon}_{t})\frac{\mathbb{E}[\vg_t]}{\sqrt{\mathbb{E}[\rms(\vg_t)^2]}}\!+\!\frac{d_t}{2}\mathbb{E}[\gamma_t]\bm{\theta}_t\Big)\!\cdot\!\bm{\varepsilon}_{t} + c_t^2\xi^2\rms(\bm{\varepsilon}_{t})^2.
\end{align}
As in the derivation of the initial learning-rate, we make an optimistic estimation that $\mathbb{E}[\vg_t]$ and $\bm{\varepsilon}_{t}$ have the same direction. Then, applying Assumption~\ref{ass:xi} we have 
\begin{align*}
\frac{1}{k}\frac{\mathbb{E}[\vg_t]}{\sqrt{\mathbb{E}[\rms(\vg_t)^2]}}\cdot\bm{\varepsilon}_{t}=\frac{\rms(\mathbb{E}[\vg_t])}{\sqrt{\mathbb{E}[\rms(\vg_t)^2]}}\rms(\bm{\varepsilon}_{t})\geq\xi\rms(\bm{\varepsilon}_{t}),
\end{align*}
and \Eqref{eq:error_for_decay} implies
\begin{align*}
\mathbb{E}[\rms(\bm{\varepsilon}_{t+1})^2] &\leq\rms(\bm{\varepsilon}_{t})^2 - 2c_t\xi^2\rms(\bm{\varepsilon}_{t})^2 - \frac{d_t}{k}\mathbb{E}[\gamma_t]\bm{\theta}_t\cdot\bm{\varepsilon}_{t} + c_t^2\xi^2\rms(\bm{\varepsilon}_{t})^2 \\
&\leq\rms(\bm{\varepsilon}_{t})^2 - c_t\xi^2\rms(\bm{\varepsilon}_{t})^2 - \frac{d_t}{k}\mathbb{E}[\gamma_t]\bm{\theta}_t\cdot\bm{\varepsilon}_{t}
\numberthis\label{eq:error_for_decay2}
\end{align*}
where in the last equation we have used the fact that $c_t\leq 1$. Now, in order to estimate $\bm{\theta}_t\cdot\bm{\varepsilon}_{t}$, we assume that $\bm{\theta}_t$ will be evenly distributed on the hypersphere of radius $\rms(\bm{\varepsilon}_{t})$ around $\bm{\theta}^\ast$ as the training proceeds. Then, if $k\geq 3$, for most $\bm{\theta}_t$ from the distribution we will have  $\bm{\theta}^\ast\cdot\bm{\varepsilon}_{t}\approx0$. In this case, 
we have $\frac{1}{k}\bm{\theta}_t\cdot\bm{\varepsilon}_{t}=\frac{1}{k}(\bm{\varepsilon}_{t}+\bm{\theta}^\ast)\cdot\bm{\varepsilon}_{t}\approx\rms(\bm{\varepsilon}_{t})^2$, and ``on average'' it is safe to assume\footnote{It not useful in this work to provide a rigorous definition of ``on average''. We only point out its deep connection with Stein's example \citep{Stein1956} that if $k\geq 3$, an estimator with L2 regularization can be better than the maximum likelihood estimator without L2.} that $\frac{1}{k}\bm{\theta}_t\cdot\bm{\varepsilon}_{t}\geq q\rms(\bm{\varepsilon}_{t})^2$ for some constant $q>0$. 
Then, \Eqref{eq:error_for_decay2} becomes
\begin{align}
\label{eq:error_for_decay3}
\mathbb{E}[\rms(\bm{\varepsilon}_{t+1})^2]\leq\rms(\bm{\varepsilon}_{t})^2 - \mathbb{E}[\gamma_t](1 + d_t q)\rms(\bm{\varepsilon}_{t})^2
\end{align}
where we have used the fact that $\mathbb{E}[\gamma_t]=c_t\xi^2$. In light of \Eqref{eq:error_for_decay3}, we consider the following asymptotic difference equation:
\begin{align}
\label{eq:error_for_decay4}
e_{t+1}\sim e_{t} - \gamma_t(1 + d_t q) e_t
\end{align}
where $e_t$ is intended to follow the asymptotic behavior of $\rms(\bm{\varepsilon}_{t})^2$. Since we have $d_t\sim\frac{r}{\eta}\rms(\bm{\varepsilon}_{t})^2$, it is natural to assume $d_t\sim\frac{r}{\eta}e_{t}$. Then, we transform \Eqref{eq:error_for_decay4} as the following:
\begin{align*}
\frac{1}{e_{t+1}}\sim\frac{1}{e_{t}}\cdot\frac{1}{1 - \gamma_t(1 + d_t q)}
\sim\frac{1}{e_{t}}\big(1+\gamma_t(1 + d_t q)\big)\sim\frac{1}{e_{t}}+\gamma_t(\frac{1}{e_{t}} + \frac{qr}{\eta}),
\end{align*}
in which we have used the approximation $1/(1-x)\sim 1+x$ applied to $x=\gamma_t(1 + d_t q)$. Thus, the update rule of $b_t$ in Algorithm~\ref{alg:amos} can be revealed by setting $b_t=\dfrac{\eta}{qr}\dfrac{1}{e_t}$:
\begin{align*}
b_{t+1}=b_{t}+\gamma_t(b_{t}+1).
\end{align*}
And $d_t\sim\dfrac{r}{\eta}e_{t}$ implies $d_t\sim\dfrac{1}{q b_t}$, so we set $d_t=\dfrac{1}{1+q b_t}$ to satisfy both the asymptotic behavior and $d_0=1$.

Similarly, since $c_t\sim r\rms(\bm{\varepsilon}_{t})$ we have $c_t\sim\dfrac{1}{\sqrt{p b_t}}$ where $p=\dfrac{q}{r\eta}$. So we set $c_t=\dfrac{1}{\sqrt{1+p b_t}}$ to satisfy the asymptotic behavior and $c_0=1$.

\subsection{Connection to SGD}
\label{sec:sgd}
The derivation of decay factors in Amos (\S\,\ref{sec:amos}) is largely inspired by SGD \citep{Murata1998ASS}. In this section, we recall the theory of learning-rate schedule of SGD and discuss its relation with Amos.

The update rule of SGD is simply $\delta_t\gets\alpha_t g_t$, where $\alpha_t$ is a scalar learning-rate. It is recommended to set the learning-rate schedule to $\alpha_t=\frac{\alpha}{1+\alpha\lambda t}$, where $\alpha$ is the initial learning-rate and $\lambda$ is the smallest eigen-value of the Hessian \citep{bottou-tricks-2012}. This is based on the following discussion.


\begin{lemma}
\label{lem:sgdlr}
Assume $\bm{\tilde{\theta}}_{t}$ is in a neighborhood of a local minimum $\bm{\tilde{\theta}}^\ast$, such that the gradient $\mathbb{E}[\bm{\tilde{g}}_t]$ is approximated by $\mH\bm{\tilde{\varepsilon}}_{t}$ via Taylor expansion. Here, $\mH=\mathbb{E}[\nabla^2\ell(z_t;\bm{\tilde{\theta}}^\ast)]$ is the Hessian at $\bm{\tilde{\theta}}^\ast$. Let $0<\lambda$ be the smallest eigen-value of $\mH$. Then,
\begin{align}
\label{eq:sgd_error}
\mathbb{E}[\rms(\bm{\tilde{\varepsilon}}_{t+1})^2] \leq \rms(\bm{\tilde{\varepsilon}}_{t})^2 - 2\lambda\alpha_t\rms(\bm{\tilde{\varepsilon}}_{t})^2 + \alpha_t^2\mathbb{E}[\rms(\bm{\tilde{g}}_t)^2]
\end{align}
and the minimum of RHS of \Eqref{eq:sgd_error} is achieved by 
\begin{align}
\label{eq:sgdlr}
\alpha_t=\dfrac{\lambda\rms(\bm{\tilde{\varepsilon}}_{t})^2}{\mathbb{E}[\rms(\bm{\tilde{g}}_t)^2]}\quad\mbox{and}\quad\mathbb{E}[\rms(\bm{\tilde{\varepsilon}}_{t+1})^2]\leq\rms(\bm{\tilde{\varepsilon}}_{t})^2-\dfrac{\lambda^2\rms(\bm{\tilde{\varepsilon}}_{t})^4}{\mathbb{E}[\rms(\bm{\tilde{g}}_t)^2]}.
\end{align}
\end{lemma}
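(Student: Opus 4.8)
The plan is to track $\rms(\bm{\tilde{\varepsilon}}_{t+1})^2$ directly from the SGD update $\bm{\tilde{\theta}}_{t+1}=\bm{\tilde{\theta}}_t-\alpha_t\bm{\tilde{g}}_t$, mirroring the expansion already used for \Eqref{eq:converge}. Writing $\bm{\tilde{\varepsilon}}_{t+1}=\bm{\tilde{\varepsilon}}_t-\alpha_t\bm{\tilde{g}}_t$ and using $\rms(\va)^2=\frac{1}{k}\lVert\va\rVert^2$, I would expand
\begin{align*}
\rms(\bm{\tilde{\varepsilon}}_{t+1})^2=\rms(\bm{\tilde{\varepsilon}}_t)^2-\frac{2\alpha_t}{k}\bm{\tilde{g}}_t\cdot\bm{\tilde{\varepsilon}}_t+\alpha_t^2\rms(\bm{\tilde{g}}_t)^2.
\end{align*}
Taking the conditional expectation $\mathbb{E}[\bullet]$ over $z_t$ and using that the scalar $\alpha_t$ and the vector $\bm{\tilde{\varepsilon}}_t$ do not depend on $z_t$ (so they factor out of $\mathbb{E}[\bullet]$, exactly as in the Descent Lemma), the only stochastic factor left in the cross term is $\mathbb{E}[\bm{\tilde{g}}_t]$.

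Next I would substitute the Taylor approximation $\mathbb{E}[\bm{\tilde{g}}_t]\approx\mH\bm{\tilde{\varepsilon}}_t$, turning the cross term into the quadratic form $\frac{2\alpha_t}{k}\bm{\tilde{\varepsilon}}_t^\top\mH\bm{\tilde{\varepsilon}}_t$. Since $\mH$ is symmetric with smallest eigenvalue $\lambda>0$, the Rayleigh-quotient bound gives $\bm{\tilde{\varepsilon}}_t^\top\mH\bm{\tilde{\varepsilon}}_t\geq\lambda\lVert\bm{\tilde{\varepsilon}}_t\rVert^2=\lambda k\,\rms(\bm{\tilde{\varepsilon}}_t)^2$. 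Because $\alpha_t>0$, lower-bounding this subtracted cross term yields precisely the upper bound \Eqref{eq:sgd_error}; the one point needing care is the inequality direction, which relies on $\alpha_t>0$.

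Finally, for the second claim I would treat the RHS of \Eqref{eq:sgd_error} as a scalar quadratic $f(\alpha_t)=\rms(\bm{\tilde{\varepsilon}}_t)^2-2\lambda\,\rms(\bm{\tilde{\varepsilon}}_t)^2\,\alpha_t+\mathbb{E}[\rms(\bm{\tilde{g}}_t)^2]\,\alpha_t^2$ in $\alpha_t$. Its leading coefficient $\mathbb{E}[\rms(\bm{\tilde{g}}_t)^2]$ is positive, so $f'(\alpha_t)=0$ gives the unique minimizer $\alpha_t=\lambda\,\rms(\bm{\tilde{\varepsilon}}_t)^2/\mathbb{E}[\rms(\bm{\tilde{g}}_t)^2]$, which is the stated learning-rate. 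Substituting this value back into $f$ and combining the two equal terms $\lambda^2\rms(\bm{\tilde{\varepsilon}}_t)^4/\mathbb{E}[\rms(\bm{\tilde{g}}_t)^2]$ yields the final bound in \Eqref{eq:sgdlr}. This last step is a routine completion-of-the-square; I expect no genuine obstacle, the only substantive (and approximate) move being the replacement $\mathbb{E}[\bm{\tilde{g}}_t]\approx\mH\bm{\tilde{\varepsilon}}_t$ together with the eigenvalue bound that converts it into the usable inequality.
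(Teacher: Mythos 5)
Your proposal is correct and follows essentially the same route as the paper's proof: expand the squared error (the paper simply cites its Equation~\ref{eq:converge} rather than re-deriving it), take the conditional expectation, apply the Taylor approximation $\mathbb{E}[\bm{\tilde{g}}_t]\approx\mH\bm{\tilde{\varepsilon}}_t$ with the smallest-eigenvalue (Rayleigh-quotient) bound, and minimize the resulting quadratic in $\alpha_t$. Your added remarks about $\alpha_t>0$ governing the inequality direction and about $\alpha_t,\bm{\tilde{\varepsilon}}_t$ factoring out of the expectation are sound points that the paper leaves implicit.
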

\begin{proof}
Since $\bm{\tilde{\theta}}^\ast$ is a local minimum, we have $\mathbb{E}[\nabla\ell(z_t;\bm{\tilde{\theta}}^\ast)]=\bm{0}$ and $\mathbb{E}[\bm{\tilde{g}}_t]\approx\mH\bm{\tilde{\varepsilon}}_{t}$, where $\mH$ is positive definite. Given $\lambda$ the smallest eigen-value of $\mH$, we have $\mathbb{E}[\bm{\tilde{g}}_t]\cdot\bm{\tilde{\varepsilon}}_{t}\geq\lambda\lVert\bm{\tilde{\varepsilon}}_{t}\rVert^2$. Applying this to $\mathbb{E}[\rms(\bullet)]$ of \Eqref{eq:online}, we get \Eqref{eq:sgd_error}. Now the RHS is a quadratic function of $\alpha_t$, and it takes minimum at \Eqref{eq:sgdlr}. So the lemma follows.
\end{proof}

Note that both Amos and SGD analyze the evolution of $\rms(\bm{\varepsilon}_{t})^2$ by estimating $\alpha_t\vg_t\cdot\bm{\varepsilon}_{t}$. For SGD this is achieved by approximating $\mathbb{E}[\bm{\tilde{g}}_t]$ with the Hessian. For Amos, on the other hand, we have to make Assumption~\ref{ass:xi} due to the gradient normalization factor $1/\sqrt{\mathbb{E}[\rms(\vg_t)^2]}$. In both cases, the learning-rate decay is derived by setting $\alpha_t$ in terms of $\rms(\bm{\varepsilon}_{t})$ so that $\rms(\bm{\varepsilon}_{t})^2$ decreases fast, then solve the asymptotic behavior of $\rms(\bm{\varepsilon}_{t})$.

\paragraph{Heuristic derivation of $\alpha_t$:} We assume $\lim_{t\to\infty}\mathbb{E}[\rms(\bm{\tilde{g}}_t)^2]=\nu>0$. In light of \Eqref{eq:sgdlr}, we consider the following asymptotic difference equation:
\begin{align}
\label{eq:sgd_aymptotic}
e_{t+1}\sim e_{t}-\frac{\lambda^2}{\nu}e_{t}^2
\end{align}
where $e_t$ is intended to follow the asymptotic behavior of $\rms(\bm{\tilde{\varepsilon}}_{t})^2$. We transform \Eqref{eq:sgd_aymptotic} as:
\begin{align*}
\frac{1}{e_{t+1}}\sim\frac{1}{e_{t}}\cdot\frac{1}{1-\frac{\lambda^2}{\nu}e_{t}}\sim\frac{1}{e_{t}}(1+\frac{\lambda^2}{\nu}e_{t})=\frac{1}{e_{t}}+\frac{\lambda^2}{\nu}
\end{align*}
so we have $\dfrac{1}{e_{t}}\sim\dfrac{\lambda^2}{\nu}t$. Now, since $\alpha_t=\dfrac{\lambda\rms(\bm{\tilde{\varepsilon}}_{t})^2}{\mathbb{E}[\rms(\bm{\tilde{g}}_t)^2]}$ we have $\alpha_t\sim\dfrac{\lambda}{\nu}e_t\sim\dfrac{1}{\lambda t}$. So $\alpha_t=\dfrac{\alpha}{1+\alpha\lambda t}$ satisfies both the asymptotic behavior and $\alpha_0=\alpha$.

In the above derivation, the assumption $\lim_{t\to\infty}\mathbb{E}[\rms(\bm{\tilde{g}}_t)^2]=\nu>0$ states that $\mathbb{E}[\rms(\bm{\tilde{g}}_t)^2]$ will converge to some non-zero value and will not further decrease. This is often described intuitively as ``the stochastic noise of sampled gradients does not vanish'', a characteristic feature in the theory of SGD. It is in drastic contrast with Assumption~\ref{ass:xi}: We assume that $\mathbb{E}[\rms(\vg_t)^2]$ decreases along with $\rms(\mathbb{E}[\vg_t])$ in Amos. \citet{pmlr-v80-ma18a} pointed out that the vanishing of $\mathbb{E}[\rms(\vg_t)^2]$ might lead to faster convergence; but to our knowledge, Amos is the first work to use the vanishing of $\mathbb{E}[\rms(\vg_t)^2]$ to actually develop an optimizer that empirically converges faster.

For SGD, the hyper-parameter $\lambda$ is generally unknown; but if we adopt an L2 regularizer of strength $\lambda'$, it is guaranteed that $\lambda\geq\lambda'$, so one can safely set the learning-rate to $\frac{\alpha}{1+\alpha\lambda' t}$  \citep{bottou-tricks-2012}. In Amos, the strength of L2 regularization $\bm{\tilde{\gamma}}_t$ takes a similar role in controlling the speed of learning-rate decay. We expect this work to inspire more theoretical investigation into this principle.


\subsection{The Calculation of \texorpdfstring{$\bm{\tilde{\eta}}$}{p}}
\label{sec:calc_eta}

As explained in \S\,\ref{sec:algorithm}, for a linear transformation $\vy=\vx\mW+\vu\;(\mW,\vu\subseteq\bm{\tilde{\theta}},\,\mW\in\mathbb{R}^{m\times n},\,\vx\in\mathbb{R}^{m})$, we set $\eta(\mW)=\sigma_y/(\sigma_x\sqrt{m})$ and $\eta(\vu)=\sigma_y/2$, where $\sigma_x$ is the standard deviation of entries of $\vx$ and $\sigma_y$ the standard deviation of entries of $\vy$. The values of $\sigma_x$ and $\sigma_y$ are constrained by connected layers, and non-linear layers usually expect entries of input/output tensors from some approximate range. In Table~\ref{tab:nonlinear}, we show 3 types of non-linear layers that occur in Transformer, and specify their input/output range (i.e.~expected standard deviation) used for calculating $\bm{\tilde{\eta}}$.

\begin{table}[b!]
\centering
\begin{tabular}{lcc}
\toprule
Type of Non-linear Layer & Input Range & Ourtput Range \\ \midrule
Activation in MLP        & 1           & $\sqrt{1/2}$  \\
Softmax of $n$ classes             & 1           & $\sqrt{1/n}$  \\
LayerNormalization       & N/A         & 1             \\ \bottomrule
\end{tabular}
\caption{The input/output range of non-linear layers we specify in this work for calculating $\bm{\tilde{\eta}}$.}
\label{tab:nonlinear}
\end{table}

For activations, e.g.~GELU \citep{DBLP:journals/corr/HendrycksG16} in the Multi-Layer Perceptron (MLP) block, the input range is set to $1$ because the non-linearity of the activation function mostly lies within that range; and the output range is set to $\sqrt{1/2}$ because the activation function, as similar to ReLU \citep{DBLP:conf/icml/NairH10}, will map negative values (which account for $1/2$ of the input dimension) to close to $0$ and approximately retain positive values. 

For Softmax of $n$ classes, the input range is set to $1$ because the derivative of $\exp(x)$ is close to $1$ within the $\lvert x\rvert\leq 1$ range (so Softmax is most sensitive to values within this range); and the output range is set to $\sqrt{1/n}$ because the output is an $n$-dimension vector of L2 norm $\leq 1$ (so the quadratic mean of entries $\leq\sqrt{1/n}$).

For LayerNormalization \citep{DBLP:journals/corr/BaKH16}, the input range is arbitrary because the input will be normalized. The output range is expected to be $1$.

We will discuss the calculation of $\bm{\tilde{\eta}}$ for specific models in the next sub-sections.

\subsubsection{BERT, RoPE and RPE}

For BERT, RoPE and RPE, the multi-headed attention layer receives the hidden state $\vx$, and the linear transformations $\vx\mQ$ (i.e.~the query) and $\vx\mK$ (i.e.~the key) are expected to have standard deviation $1$ 
so that the dot-product $\sqrt{1/h}(\vx\mQ)\cdot(\vx\mK)$ (i.e.~attention score) has standard deviation $1$ as well (and this is why there is the scaling factor $\sqrt{1/h}$, where $h$ is the size per-head), which is expected by the Softmax for calculating the attention probability. Therefore, the output ranges of $\mQ$ and $\mK$ are $1$. For RoPE, the dot-product is replaced by a bi-linear form which encodes relative positions, but this does not change the scale because the bi-linear form is orthogonal.

\begin{table}[t!]
\centering
\begin{tabular}{lcl}
\toprule
Type of Variable   & $\eta$               & Remark                                                        \\ \midrule
Bias in all Linears     & 0.5                  &                  \\
LayerNormalization Scale               & 1                    &                          \\
Input Embeddings                       & $\sqrt{1/d}$ & $d$ is the size of hidden states  \\
MLP/Dense2/Kernel                      & $\sqrt{2/m}$ & $m$ is the size of intermediate activation in the MLP \\
Other Linear Kernels & $\sqrt{1/d}$ & $d$ is the size of hidden states   \\
\midrule
Relative Position Embeddings           & 0.5                  &                                    \\ \bottomrule
\end{tabular}
\caption{The $\eta$ calculated for variables in BERT, RoPE and RPE. MLP/Dense2/Kernel is the linear kernel for the output layer of the MLP block. Other linear kernels include e.g.~query, key and value kernels in the multi-headed attention layer.} 
\label{tab:eta1}
\end{table}

For other linear transformations in the model, the outputs are either fed into the activation function of an MLP (which requires input range $1$), or serve as a summand in a Residual Connection where the residual part comes from a LayerNormalization (which has range $1$). So \emph{all the linear transformations have output range 1} in these model architectures.

Thus, we set the $\eta$ of bias in all linear transformations to $0.5$, and the $\eta$ for kernels is categorized by the input range and dimension, as we show in Table~\ref{tab:eta1}.

The input embeddings (i.e.~token embeddings, position embeddings and segment-type embeddings) are inputs to LayerNormalization so their scales are not constrained there; but the token embeddings are also used as the linear kernel for producing the logits of token generation, which expects input range $1$ (because it comes from LayerNormalization) and input dimension $d$ (where $d$ is the hidden size), so $\eta$ is set to $\sqrt{1/d}$.

For the linear kernel of the MLP output layer (MLP/Dense2/Kernel), the input range is $\sqrt{1/2}$ because it comes from a non-linear activation, and input dimension $m$ is the size of intermediate activation in the MLP, so $\eta$ is $\sqrt{2/m}$.

For all other linear kernels, the input range is $1$ because it comes from LayerNormalization, and input dimension is the hidden size $d$. So $\eta$ is $\sqrt{1/d}$.

The relative position embeddings in the RPE model is used as input to the key and value transformations at each layer, similar to the hidden state. We set $\eta$ to $0.5$ so its scale is close to the hidden state (which has scale $1$) but will not dominate it.

\begin{table}[b!]
\centering
\begin{tabular}{lcl}
\toprule
Type of Variables        & $\eta$           & Remark \\ \midrule
LayerNormalization Scale & 1               &         \\
Query Kernel             & $\sqrt{1/(hd)}$ & $h$ is the size per-head and $d$ is the size of hidden states \\
Input Embeddings         & 1               &         \\
MLP/wo/Kernel            & $\sqrt{2/m}$    & $m$ is the size of intermediate activation in the MLP \\
Other Linear Kernels  & $\sqrt{1/d}$    &    $d$ is the size of hidden states \\
Relative Attention Bias  & 0.5             &         \\ \bottomrule
\end{tabular}
\caption{The $\eta$ calculated for variables in T5. MLP/wo/Kernel is the linear kernel for the output layer of the MLP block.} 
\label{tab:eta2}
\end{table}

\paragraph{Experiments with Amos-*Scale} In \S\,\ref{sec:pretrain_convergence}, we have experimented with pre-training RPE and BERT-large with different $\bm{\tilde{\eta}}$ (Amos-*Scale). For RPE (Figure~\ref{fig:pretrain1}c), we tried setting $\eta$ of the relative position embeddings to $\sqrt{1/d}$ instead of $0.5$. For BERT-large (Figure~\ref{fig:pretrain2}a), we tried setting $\eta$ of MLP/Dense2/Kernel to $\sqrt{1/d}$ instead of $\sqrt{2/m}$. They both had impact on performance. Especially for BERT-large, $\sqrt{1/d}$ and $\sqrt{2/m}$ only differ by a $\sqrt{2}$ factor (because $m=4d$), still the performance gap is significant. It illustrates the importance of setting $\bm{\tilde{\eta}}$ appropriately.

\subsubsection{T5}

For the T5 model, $\eta$ is set as in Table~\ref{tab:eta2}. It is different from Table~\ref{tab:eta1}, due to several differences between the T5 architecture and BERT, as discussed below. 
\begin{enumerate}
\item Linear transformations do not have bias terms in T5.
\item Attention score is calculated by $(\vx\mQ)\cdot(\vx\mK)$ in T5, without the scaling factor. Instead, the query kernel $\mQ$ is initialized to a smaller scale $\sqrt{1/(hd)}$, with an extra $\sqrt{1/h}$ factor compared to $\mK$. Thus, we accordingly set $\eta$ of the query kernel to $\sqrt{1/(hd)}$.
\item The token embeddings are no longer re-used for producing logits of token generation. So we set $\eta$ to $1$, which is the same as the scale for initialization.
\item The MLP activation function (i.e.~gated-GELU) used in T5 is different from BERT. Still, $\eta$ for the linear kernel of the MLP output (MLP/wo/Kernel) is set to the same.
\item We set $\eta$ of the relative attention bias to $0.5$ so its scale is close to the attention score (which has scale $1$) but will not dominate it.
\end{enumerate}

\subsection{Detailed Experiment Settings and Learning-rate Search}
\label{sec:detailed_exp_settings}

In this section, we discuss detailed settings of the pre-training experiments in \S\,\ref{sec:pretrain_convergence}. The hyper-parameters and required computation resources are shown in Table~\ref{tab:settings}. For pre-training BERT with AdamW, we follow the settings of \citet{roberta}. For RPE, pre-training on TPU is slow, so we use a different configuration with more TPU cores to train the base-sized model. For T5, we found that using $\beta=0.98$ for Amos and AdamW causes training instability, so we decrease the value to $\beta=0.95$. The settings of AdaFactor follow \citet{t5} and \citet{DBLP:conf/icml/ShazeerS18}.

\begin{table}[b!]
\centering
\begin{tabular}{lclccll}
\toprule
                            & Batch Size            & Optimizer & $\beta$ & Learning-rate & \#Steps    & Resource                                                                       \\ \midrule
\multirow{2}{*}{BERT-base}  & \multirow{2}{*}{1024} & AdamW     & 0.98    & 2e-4          & 200k/300k & \multirow{4}{*}{\begin{tabular}[c]{@{}l@{}}TPUv4 2x2x4\\ About 2 days\end{tabular}} \\
                            &                       & Amos      & 0.98    & 0.01          & 300k      &                                                                                \\ \cmidrule(r){1-6}
\multirow{2}{*}{RoPE-base}  & \multirow{2}{*}{1024} & AdamW     & 0.98    & 2e-4          & 200k/300k &                                                                                \\
                            &                       & Amos      & 0.98    & 0.01          & 300k      &                                                                                \\ \midrule
\multirow{2}{*}{RPE}        & \multirow{2}{*}{1024} & AdamW     & 0.98    & 2e-4          & 200k/300k & \multirow{2}{*}{\begin{tabular}[c]{@{}l@{}}TPUv3 8x8\\ About 4 days\end{tabular}}   \\
                            &                       & Amos      & 0.98    & 0.01          & 300k      &                                                                                \\ \midrule
\multirow{2}{*}{BERT-large} & \multirow{2}{*}{4096} & AdamW     & 0.98    & 2e-4          & 250k      & \multirow{7}{*}{\begin{tabular}[c]{@{}l@{}}TPUv4 4x4x4\\ About 4 days\end{tabular}} \\
                            &                       & Amos      & 0.98    & 0.01          & 250k      &                                                                                \\ \cmidrule(r){1-6}
\multirow{2}{*}{RoPE-large} & \multirow{2}{*}{1024} & AdamW     & 0.99    & 1e-4          & 1M        &                                                                                \\
                            &                       & Amos      & 0.99    & 5e-3          & 1M        &                                                                                \\ \cmidrule(r){1-6}
\multirow{3}{*}{T5-large}   & \multirow{3}{*}{4096} & AdamW     & 0.95    & 1e-3          & 250k      &                                                                                \\
                            &                       & Amos      & 0.95    & 0.01          & 250k      &                                                                                \\
                            &                       & AdaFactor & 0.8     & 0.01          & 250k      &                                                                                \\ \bottomrule
\end{tabular}
\caption{Hyper-parameter settings and required computational resources. The hyper-parameter $\beta$ in Amos is corresponding to $\beta_2$ in AdamW and the (second moment) decay rate in AdaFactor.}
\label{tab:settings}
\end{table}

For encoder-only models (i.e.~BERT, RoPE and RPE) trained on the Wikipedia+Books corpus, we use the Penn TreeBank corpus \citep{marcus-etal-1993-building} as the validation set. The training precision is float32. Number of warm-up steps is set to 10k for AdamW and 20k for Amos.

For T5, the training loss is cross-entropy with an extra regularization term, $(\log Z)^2$ (where $Z$ is the normalization factor in Softmax), which makes the logits close to mean 0 and self-normalized. In Figure~\ref{fig:pretrain2}c, we plot cross-entropy for validation loss instead of the loss used for training. The training precision of T5 is bfloat16. Possibly because linear transformations in T5 do not have bias terms, we found the model easier to train than BERT, and Amos can be applied without warm-up of learning-rate. The number of warm-up steps is set to 10k for both AdamW and AdaFactor. Learning-rate decay is in proportion to $t^{-1/2}$ (where $t$ is the step) for AdaFactor and linear for AdamW.

For pre-training BERT-base, we present a learning-rate search in Figure~\ref{fig:learning_rate_search}. For AdamW (Figure~\ref{fig:learning_rate_search}a), a smaller learning-rate significantly slows down the convergence, while a larger one results in a bumpy validation loss but almost the same performance. On the other hand, both smaller or larger learning-rate can degrade performance for Amos (Figure~\ref{fig:learning_rate_search}bc). Comparing Figure~\ref{fig:learning_rate_search}b and Figure~\ref{fig:learning_rate_search}c, we also verify a theoretical prediction about the global learning-rate of Amos in \S\,\ref{sec:derivation}, i.e.~the best learning-rate for Amos is in proportion to the square-root of the batch size: Training with $4\times$ the batch size matches $2\times$ the learning-rate.

\begin{figure}[t!]
\centering
\includegraphics[scale=0.55,viewport=0 0 9.1in 2.0in,clip]{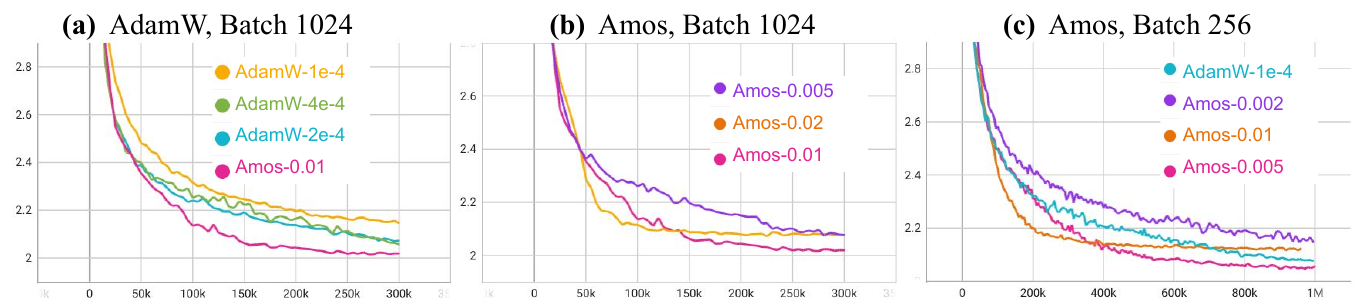}
\caption{Validation loss for pre-training BERT-base. We compare different learning-rates for (a) AdamW with batch size 1024, (b) Amos with batch size 1024 and (c) Amos with batch size 256.}
\label{fig:learning_rate_search}
\end{figure}

\subsection{Verification of Assumption~\ref{ass:xi}}
\label{sec:verify_ass_xi}

\begin{figure}[t!]
\centering
\includegraphics[scale=0.55,viewport=0 0 9.1in 2.0in,clip]{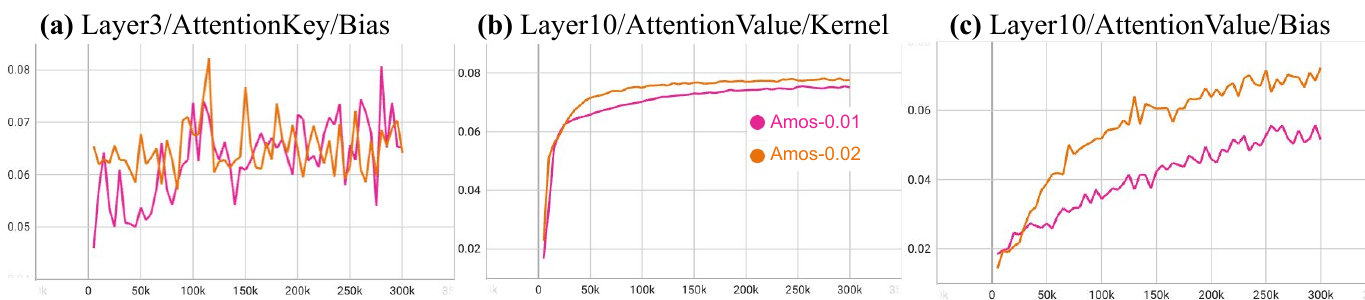}
\caption{Plot of the ratio $\frac{\rms(\mathbb{E}[\vg_t])}{\sqrt{\mathbb{E}[\rms(\vg_t)^2]}}$ for variables in the BERT-base model, over pre-training steps.} 
\label{fig:xi}
\end{figure}

In Assumption~\ref{ass:xi}, 
we have assumed that $\frac{\rms(\mathbb{E}[\vg_t])}{\sqrt{\mathbb{E}[\rms(\vg_t)^2]}}\geq\xi>0$ for all $t$ and across all variables. $\mathbb{E}[\vg_t]$ and $\mathbb{E}[\rms(\vg_t)^2]$ can be estimated by taking the running average of $\vg_t$ and $\rms(\vg_t)^2$, respectively; so in Figure~\ref{fig:xi} we track the pre-training of the BERT-base model, calculate the running averages with exponential decay rate $0.98$, and show some typical plots of the ratio. We note two characteristics of the plots: (1) the ratios are \emph{increasing} as the training proceeds, which suggests that taking a global constant $\xi$ to satisfy Assumption~\ref{ass:xi} is indeed possible; (2) starting points on the left of these plots are similar across different learning rates, which suggests that it is detectable in the early stage of training whether a learning-rate is too small or too large. In fact, in all plots for all variables we can see that the ratio $\frac{\rms(\mathbb{E}[\vg_t])}{\sqrt{\mathbb{E}[\rms(\vg_t)^2]}}\geq 0.01$; the appropriate global learning-rate can be read from these plots.

\subsection{Fine-tuning Results}
\label{sec:finetune}

\begin{table}[t!]
\centering
\begin{tabular}{lcc}
\toprule
           & MNLI-matched    & MNLI-mismatched \\ \midrule
Amos@150k  & $84.15\,{\scriptstyle\pm.40}$ & $84.17\,{\scriptstyle\pm.37}$ \\
Amos@300k  & $84.72\,{\scriptstyle\pm.15}$ & $84.44\,{\scriptstyle\pm.26}$ \\ \midrule
AdamW-200k & $83.19\,{\scriptstyle\pm.37}$ & $83.45\,{\scriptstyle\pm.41}$ \\
AdamW-300k & $83.84\,{\scriptstyle\pm.14}$ & $83.88\,{\scriptstyle\pm.17}$ \\ \bottomrule
\end{tabular}
\caption{Fine-tuned accuracy on MNLI dev set. We show the mean and standard deviation of 3 runs.}
\label{tab:mnli_finetune}
\end{table}

In Table~\ref{tab:mnli_finetune}, we show fine-tuning results on the MNLI \citep{williams-etal-2018-broad} dataset. We compare checkpoints pre-trained for 150k and 300k steps with Amos, and the final checkpoints of AdamW-200k and AdamW-300k. We fine-tune all checkpoints using the Adam optimizer with learning-rate 5e-6, batch size 16, and evaluate by the best accuracy on the MNLI dev set among every 1k of 200k training steps. We run each experiment 3 times and report the mean and standard deviation. The checkpoint pre-trained for 150k by Amos already outperforms the final checkpoint of AdamW-300k. Thus, the faster convergence by Amos in pre-training indeed transfers to better performance in fine-tuning; we can save $50\%$ of the pre-training cost by using Amos instead of AdamW.

\subsection{Ablation of Memory Reduction}
\label{sec:memory_reduction}

In this section, we experiment with different settings of the memory reduction. We compare the current setting of reducing the input dimension for linear transformations (Reduce\_1Axis), to no memory reduction at all (No\_Reduce), and the setting of reducing both axes for linear transformations (Reduce\_Dense). For embedding matrices, no axis is reduced in the No\_Reduce setting, and the embed dimension is reduced for both Reduce\_1Axis and Reduce\_Dense. We have tried reducing both axes for embedding matrices as well, but found the training unstable in this setting. The comparison of memory usage for slot variables is shown below.
\begin{align*}
\text{AdaFactor (No Momentum)}\ll\text{Reduce\_Dense}<\text{Reduce\_1Axis}\ll\text{AdamW}\ll\text{No\_Reduce}.
\end{align*}
Without memory reduction, Amos (No\_Reduce) consumes more memory than AdamW because it has more slot variables ($\bm{\tilde{v}}_t,\bm{\tilde{b}}_t,\bm{\tilde{m}}_t$ vs.~$\bm{\tilde{v}}_t,\bm{\tilde{m}}_t$). When memory reduction is applied, the memory usage of $\bm{\tilde{v}}_t,\bm{\tilde{b}}_t$ becomes negligible compared to the momentum $\bm{\tilde{m}}_t$, so Amos (Reduce\_1Axis and Reduce\_Dense) requires $<51\%$ memory for slot variables than AdamW. The memory reduction method used by Amos is more efficient than the matrix factorization used by AdaFactor, but in the pre-training of T5 (Figure~\ref{fig:pretrain2}c), AdaFactor achieved favorable performance (although slightly worse in the end than AdamW with linear learning-rate decay) without using momentum, reducing the memory usage further. Whether Amos can achieve a similar performance without using momentum is unclear yet.

\begin{figure}[h]
\centering
\includegraphics[scale=0.75,viewport=0 0 6.5in 2.0in,clip]{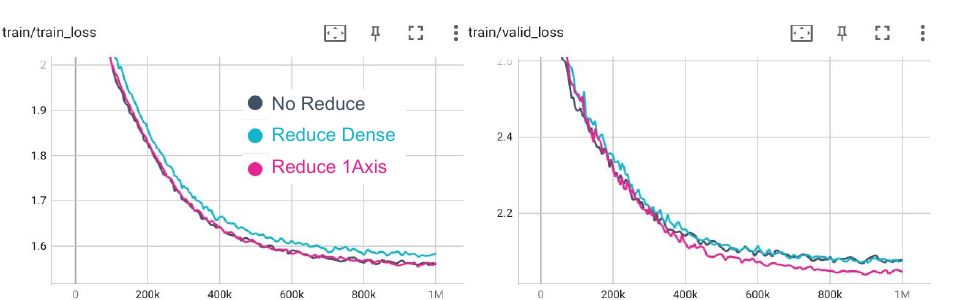}
\caption{Pre-training BERT-base using Amos with different memory reduction settings.}
\label{fig:memory}
\end{figure}

In Figure~\ref{fig:memory}, we show the training and validation loss of pre-training BERT-base by Amos with different memory reduction settings. Reduce\_Dense is slightly worse in training loss compared to No\_Reduce, but not so much in validation loss. On the other hand, Reduce\_1Axis is almost the same as No\_Reduce in training loss, and generalizes even slightly better in validation loss than the other two. So the current Reduce\_1Axis setting for Amos is favorable.

\subsection{Training ResNet50 on ImageNet}
\label{sec:resnet50_imagenet}

In this section, we apply Amos to the training of ResNet50 \citep{resnet} on the ImageNet dataset \citep{imagenet}. ResNet50 is a deep Convolutional Neural Network of 50 layers, with Batch Normalization \citep{DBLP:conf/icml/IoffeS15} and Residual Connection. ImageNet is a 1000-class image classification task with 1.28M traning examples. We train with batch size 1024, on an 8-core TPU machine. The settings for Amos is out-of-the-box: the hyper-parameter $\beta$ is set to $0.95$, warmup steps 5k, and the global learning rate $\xi$ is set to $\frac{1}{\sqrt{N}}=0.028$, where $N=1281167/1024$ is the number of batches in the traning data. We use the open-sourced \texttt{init2wint}\footnote{\url{https://github.com/google/init2winit}} codebase to run the experiments.

\subsubsection{The Calculation of \texorpdfstring{$\bm{\tilde{\eta}}$}{p} for ResNet}

In order to calculate the hyper-parameter $\bm{\tilde{\eta}}$ for ResNet, we specify the input/output range of 3 types of non-linear layers in Table~\ref{tab:nonlinear-resnet}. This is similar to Transformers, with the only specialty that the output range of a Max-pooling layer is set to $1/\sqrt{2\ln{n}}$, where $n$ is the patch size. This is because the maximum of $n$ normally distributed random variables\footnote{\maxnormalurl} has a standard deviation of about $1/\sqrt{2\ln{n}}$.

\begin{table}[t!]
\centering
\begin{tabular}{lcc}
\toprule
Type of Non-linear Layer & Input Range & Ourtput Range \\ \midrule
ReLU Activation        & 1           & $\sqrt{1/2}$  \\
BatchNormalization       & N/A         & 1             \\
Max-pooling on patch size $n$ & 1 & $1/\sqrt{2\ln{n}}$ \\
\bottomrule
\end{tabular}
\caption{The input/output range of non-linear layers we use to calculate $\bm{\tilde{\eta}}$ for ResNet.}
\label{tab:nonlinear-resnet}
\end{table}

The calculated $\eta$ for different types of variables in ResNet is shown in Table~\ref{tab:eta-resnet}. 

BatchNormalization is treated the same as LayerNormalization in Transformer.

The projection kernel of the first residual block is scaled up by $\sqrt{2\ln{n}}$ because of its previous max-pooling layer of patch size $n$.

The 2nd and 3rd convolution kernels in each residual block is scaled up by $\sqrt{2}$ because their inputs come from a ReLU activation.

The variables for bias and other linear kernels are treated the same as in Transformer.

\paragraph{Settings of Amos-*Scale} We also tried an Amos-*Scale setting where the $\eta$ for the projection kernel of the first residual block is set to $\sqrt{1/d}$ instead of $\sqrt{(2\ln{n})/d}$ (in ResNet50, $n=3\times 3=9$).

\begin{table}[b!]
\centering
\begin{tabular}{p{0.31\linewidth}cp{0.47\linewidth}}
\toprule
Type of Variable   & $\eta$               & Remark                                                        \\ \midrule
\quad Bias     & 0.5                  &                  \\
\quad BatchNormalization scale               & 1                    &                          \\
\quad Projection kernel of the first residual block                       & $\sqrt{(2\ln{n})/d}$ & $n$ is the patch size of the previous max-pooling; $d$ is the input size  \\
\quad The 2nd and 3rd convolution kernels in each residual block                      & $\sqrt{2/d}$ & $d$ is the input size \\
\quad Other linear kernels & $\sqrt{1/d}$ & $d$ is the input size \\
\bottomrule
\end{tabular}
\caption{The $\eta$ calculated for variables in ResNet. Other linear kernels include convolution kernels and the final linear classification kernel.} 
\label{tab:eta-resnet}
\end{table}

\subsubsection{Results}

In Figure~\ref{fig:resnet50-imagenet}a, we show the validation error rate of Amos and Amos-*Scale, where the error rate for Amos ($0.261$ lowest) is slightly better than the Amos-*Scale setting ($0.263$ lowest). Furthermore, it is known that a strong L2 regularization is beneficial for many popular image classification tasks \citep{adamw}, but Amos does not have a hyper-parameter to adjust the strength of L2 regularization; so we tried an ad hoc setting Amos-Extra, where the Amos update rule (\Eqref{eq:amos_update_rule}) is replaced by $\bm{\delta}_t \gets 
d_t\Big(\frac{\xi\eta}{\sqrt{\hat{v}_t}}\vg_t+(\frac{1}{2}\gamma_t+0.001)\bm{\theta}_{t}\Big)$ with everything else kept the same (we also tried other constants, but 0.001 was the best). As shown in Figure~\ref{fig:resnet50-imagenet}a, Amos-Extra ($0.242$ lowest error rate) significantly improves the performance on ImageNet.

In Figure~\ref{fig:resnet50-imagenet}b, we compare the out-of-the-box Amos with Adam (no weight decay). The learning-rate schedule of Adam is set to cosine decay with 5\% warmup, and the number of training steps is set to 140k. The base learning-rate is tuned by a random search of log scale between 1e-5 and 1e-2, with 25 runs. Other hyper-parameters are set to the default (i.e.~$\beta_1=0.9$ and $\beta_2=0.999$). Amos outperforms all the 25 runs; the best 6 of the 25 are shown in Figure~\ref{fig:resnet50-imagenet}b. As alternative settings for Amos, we have also tried $\beta=0.98,\,0.999$, or $\xi=0.02$, or even changed the decay factors to $c_t=\big(1+\frac{1}{16}\sqrt{\xi}b_t\big)^{-\frac{1}{2}}$ and $d_t=\big(1+\frac{1}{16}\sqrt{\xi\eta}b_t\big)^{-1}$. All the other settings converge to almost the same validation error rate, sometimes with slightly slower convergence.

In Figure~\ref{fig:resnet50-imagenet}c, we compare Amos-Extra with the state-of-the-art settings of AdamW. The learning-rate schedule of AdamW is set to cosine decay with 5\% warmup, and the number of training steps is set to 187k. The base learning-rate, weight decay strength, and label smoothing rate (defaults to $0.1$ for other experiments) are tuned by random search, of log scale between 1e-4 and 1e-2, log scale between 1e-2 and 1.0, and linear scale between 0.0 and 0.2, respectively, with 25 runs. Other hyper-parameters are set to the default (i.e.~$\beta_1=0.9$ and $\beta_2=0.999$). Among the 25 runs, 9 of them outperform Amos-Extra, which are shown in Figure~\ref{fig:resnet50-imagenet}c. The best performing settings of AdamW gain their advantage close to the end of training, which is probably due to the interaction between weight decay and cosine learning-rate schedule. On the other hand, Amos-Extra demonstrates faster and more stable convergence.

To conclude, when applied to ResNet50 on ImageNet, Amos can outperform Adam out-of-the-box, and become comparable to the state-of-the-art AdamW settings by adding a small constant weight decay term. However, the extra weight decay term is ad hoc, cannot be covered by our current theory (because we have assumed that the L2 regularization is weak enough and decays to 0, not to bias the loss function but only constrain the scale of trained variables), and probably is not the optimal way to strengthen L2 regularization. It leaves the problem of searching for a more general working theory that enables stronger L2 to future work.

\begin{figure}[t!]
\centering
\includegraphics[scale=0.56,viewport=0 0 9.8in 2.8in,clip]{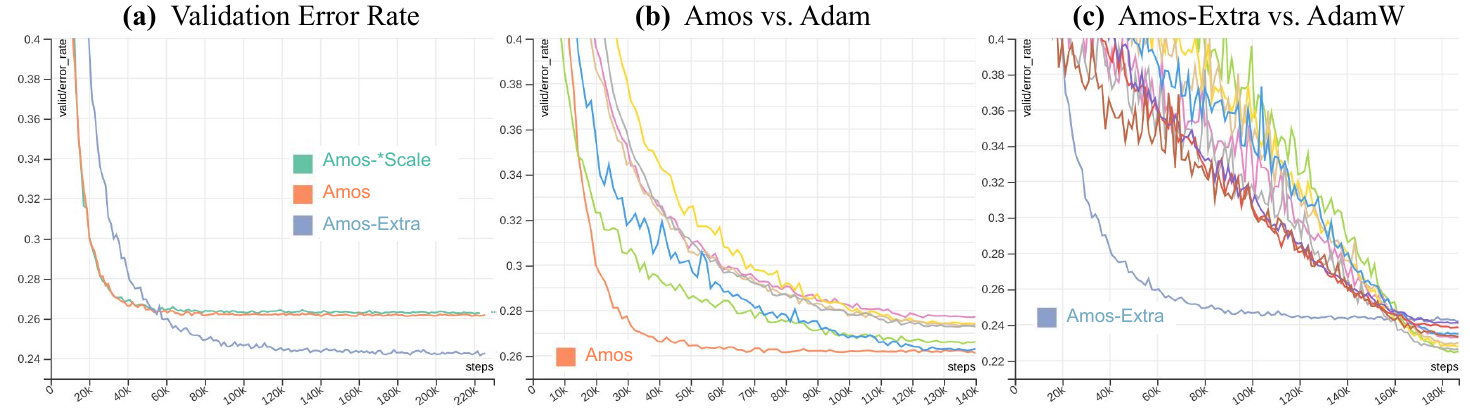}
\caption{Training ResNet50 on ImageNet. We plot error rate of the validation set.}
\label{fig:resnet50-imagenet}
\end{figure}

\end{document}